\documentclass[runningheads]{llncs}

\usepackage{eccv}

\usepackage{eccvabbrv}

\usepackage{graphicx}
\usepackage{booktabs}

\usepackage{amsmath}
\usepackage{amssymb}
\usepackage{mathtools}
\usepackage{algorithm}

\usepackage{algorithmic}

\usepackage{xcolor}

\usepackage{subcaption}

\usepackage{colortbl}

\usepackage{graphicx}

\usepackage{caption}

\usepackage{adjustbox}
\usepackage{multirow} 
\usepackage{makecell}
\definecolor{cvprblue}{rgb}{0.21,0.49,0.74}

\definecolor{CodeBG}{RGB}{255,255,255}      % white background
\definecolor{CodeFrame}{RGB}{225,225,235}   % light gray-lilac frame
\definecolor{FuncPurple}{RGB}{160,120,210}  % light purple for functions
\definecolor{CommentGreen}{RGB}{120,170,120}% light green for comments
\definecolor{ModelBlue}{RGB}{80,130,200}% light green for comments

\newcommand{\fn}[1]{\textcolor{FuncPurple}{#1}}     % function names
\newcommand{\cm}[1]{\textcolor{CommentGreen}{#1}}  % comments
\newcommand{\model}[1]{\textcolor{ModelBlue}{#1}}  % model

\usepackage{float}

\usepackage[accsupp]{axessibility}  % Improves PDF readability for those with disabilities.

\usepackage{hyperref}

\usepackage{orcidlink}

\begin{document}

% ---------------------------------------------------------------
% TODO REVIEW: Replace with your title
\title{Transition Flow Matching} 

% TODO REVIEW: If the paper title is too long for the running head, you can set
% an abbreviated paper title here. If not, comment out.
\titlerunning{Transition Flow Matching}

% TODO FINAL: Replace with your author list. 
% Include the authors' OCRID for the camera-ready version, if at all possible.
% \author{Chenrui Ma\inst{1}\orcidlink{0000-1111-2222-3333} }
\author{Chenrui Ma}

% TODO FINAL: Replace with an abbreviated list of authors.
\authorrunning{Chenrui Ma}
% First names are abbreviated in the running head.
% If there are more than two authors, 'et al.' is used.

% TODO FINAL: Replace with your institution list.
% \institute{University of California, Irvine, Irvine, CA 92697, USA
% \email{chenrum@uci.edu}\\
% \url{https://merry7cherry.github.io/} }

\institute{University of California, Irvine, Irvine, CA 92697, USA \\
\email{chenrum@uci.edu} }

\maketitle

\begin{abstract}

Mainstream flow matching methods typically focus on learning the local velocity field, which inherently requires multiple integration steps during generation. In contrast, Mean Velocity Flow models establish a relationship between the local velocity field and the global mean velocity, enabling the latter to be learned through a mathematically grounded formulation and allowing generation to be transferred to arbitrary future time points.
In this work, we propose a new paradigm that directly learns the transition flow. As a global quantity, the transition flow naturally supports generation in a single step or at arbitrary time points. Furthermore, we demonstrate the connection between our approach and Mean Velocity Flow, establishing a unified theoretical perspective. Extensive experiments validate the effectiveness of our method and support our theoretical claims.
  
  \keywords{Generative Modeling \and Flow Matching \and Fewer/One-step Generation}
\end{abstract}

\section{Introduction} \label{sec:intro}

The goal of generative modeling is to transform a prior distribution into the data distribution. Flow Matching \cite{ma2024sit, karras2022elucidating, ma2025stochastic} provides an intuitive and conceptually simple framework for constructing flow paths that transport one distribution to another. Closely related to diffusion models \cite{albergo2023stochastic, cai2025diffusion, song2021scorebased}, Flow Matching focuses on learning the velocity fields that guide the generative process during training. 
% Since its introduction, Flow Matching has been widely adopted in modern generative modeling [11, 33, 35].

Both Flow Matching~\cite{lipman2023flow} and diffusion~\cite{song2021scorebased} models rely on iterative sampling during generation. Recently, significant attention has been devoted to few-step—and particularly one-step, feedforward—generative models. One common approach to accelerate Flow/Diffusion models is distillation. However, an ideal solution would allow models to be trained from scratch in an end-to-end manner without relying on pre-trained teachers. Pioneering this direction, Consistency Models \cite{song2023consistency, geng2025consistency, frans2025one} introduce a consistency constraint on network outputs for inputs sampled along the same trajectory. Despite their promising performance, this constraint is imposed as a behavioral property of the network, while the properties of the underlying ground-truth field that should guide learning remain unclear \cite{kim2024consistency, lu2025simplifying}. As a result, training can be unstable and typically requires a carefully designed discretization curriculum to progressively constrain the time domain \cite{song2024improved, song2023consistency, ma2025beyond}.
In contrast, Mean Velocity Models \cite{geng2025mean, zhang2025alphaflow, hu2025cmt, geng2025improved, lu2026one} optimize an objective derived from the relationship between the instantaneous velocity at each time point and the mean velocity toward a future time. This formulation provides a more fundamental and elegant perspective for learning generative dynamics.

In this work, we propose a principled and effective framework, termed \textbf{Transition Flow Matching}, for few-step generation with arbitrary numbers of steps and step sizes. Instead of regressing a local vector field as in Flow/Diffusion models \cite{ma2024sit, song2021scorebased, ma2025learningstraightflowsvariational}, our method directly models the generation trajectory itself, where the transition dynamics naturally represent a global counterpart of velocity. To this end, we derive (with proof) the \textbf{Transition Flow Identity}, and propose a principled training objective that enables generative models to satisfy this identity from scratch in an end-to-end manner. This formulation extends and generalizes previous Transition Models~\cite{wang2025transition, nie2026transition, luo2025soflow} and Flow Map methods~\cite{boffi2024flow, sabour2025align}. Importantly, we further establish a unified perspective that clarifies the relationship between our framework and Mean Velocity methods \cite{geng2025mean, geng2025improved}.
% Our method enables generation with arbitrary step sizes and arbitrary numbers of steps. 
In experiments, we conduct extensive evaluations, including generation trajectory visualization and image generation tasks across multiple datasets. The results show that our approach achieves competitive performance, demonstrating the effectiveness of modeling transition flows. In addition, we perform ablation studies to analyze key implementation design choices.

Our contributions can be summarized as follows:
• We propose Transition Flow Matching, a principled framework for few-step generative modeling.
• We derive the Transition Flow Identity and provide a theoretically grounded training objective.
• We establish a unified view connecting our method with Mean Velocity models.
• We demonstrate competitive performance across multiple image generation benchmarks.
\section{Related Works} \label{sec:rw}

\paragraph{Flow Matching and Diffusion.}
Flow Matching~\cite{lipman2023flow, albergo2023building, esser2024scaling} and Diffusion~\cite{albergo2023stochastic, cai2025diffusion, song2021scorebased} models generate samples by parameterizing continuous-time dynamics via Ordinary Differential Equations (ODEs) or Stochastic Differential Equations (SDEs). Flow Matching learns a deterministic velocity field defining an ODE over sample evolution~\cite{lipman2024flow}, while Diffusion models are typically formulated as SDEs that learn the score function~\cite{lai2025principlesdiffusionmodels}. Through the probability flow ODE formulation, the learned score implicitly defines an equivalent velocity field~\cite{lipman2024flowmatchingguidecode}.
In both cases, the learned velocity or score is a local quantity depending only on the current state and time. As a result, generation requires numerical integration over multiple steps to gradually transport samples from noise to data.
A common strategy to alleviate trajectory conflicts is distillation~\cite{liu2023flow, zhang2025towards, lee2024improving, esser2024scaling, geng2023onestep, salimans2022progressive, wang2024rectified} from a well-trained Flow or Diffusion model, which provides a better coupling distribution than the standard independent coupling setting~\cite{silvestri2025vct, tong2023improving, klein2023equivariant, anonymous2025cadvae}. In contrast, our Transition Flow Matching framework directly models global state transitions. Instead of regressing a local vector field, we parameterize mappings between states at different times, enabling direct transitions from a given state to an arbitrary future state without relying on local velocity regression.

\paragraph{Consistency Model and Mean Velocity Model.}
To reduce the number of inference steps, Consistency Models ~\cite{song2023consistency, geng2025consistency, kim2024consistency, lu2025simplifying, song2024improved, frans2025one} learn generation trajectories that maintain consistency over time, enforcing trajectory coherence. Meanwhile, Mean Velocity methods~\cite{geng2025mean, zhang2025alphaflow, hu2025cmt, geng2025improved, lu2026one} establish a relationship between the instantaneous velocity at each time point and a mean velocity target toward a future time.
By contrast, our method directly models the generation trajectory itself, where the transfer dynamics naturally represent a global counterpart of velocity. This enables generation with arbitrary step sizes and arbitrary numbers of steps. Importantly, we establish a unified perspective that clarifies the relationship between our framework and Mean Velocity methods~\cite{geng2025mean, geng2025improved}.

\paragraph{Transition Models and Flow Map.}
Transition-based approaches aim to learn state transitions by preserving transition identities. Consistency Trajectory Models~\cite{kim2024consistencytrajectorymodelslearning} learn mappings between arbitrary time steps, but rely on explicit ODE/SDE integration during training. Flow Map Matching~\cite{boffi2024flow} regresses zeroth- and first-order derivatives of flow fields, while IMM~\cite{zhou2025inductive} performs moment matching across time steps.
However, these methods either lack sufficient theoretical analysis, do not clearly connect to prior formulations, or rely on distillation. In contrast, our method is trained from scratch in an end-to-end manner and provides a clear theoretical relationship under both the Flow Matching~\cite{lipman2023flow} and Mean Velocity frameworks~\cite{geng2025mean, geng2025improved}.

% =========================
% Preliminary
% =========================
\section{Preliminaries}
\label{sec:prelim}

\subsection{Notation and setup}
\paragraph{Random variables and realizations.}
We work in $\mathbb{R}^d$. Uppercase letters (e.g., $X_t, Z$) denote random variables (RVs), and lowercase letters (e.g., $x_t, z$) denote their realizations. For the density of an RV $X_t$, we write $p_t(\cdot)$, and for a conditional density we write $p_{t\mid Z}(\cdot\mid z)$. Expectations are denoted by $\mathbb{E}[\cdot]$.

\paragraph{Source/target distributions and coupling.}
Let $X_0 \sim p_0$ be the \emph{source} distribution (e.g., standard Gaussian noise) and $X_1 \sim p_1$ be the \emph{target} distribution (e.g., images).
A generative model constructs a continuous path of distributions $\{p_t\}_{t\in[0,1]}$ that transports $p_0$ to $p_1$.
Let $(X_0,X_1)$ be any coupling with joint density $\pi$ whose marginals are $p_0$ and $p_1$.
Throughout this work, we follow the standard Flow Matching setting where the source and target are independent:
$\pi(x_0,x_1)=p_0(x_0)p_1(x_1)$.

\paragraph{Conditioning variable and conditional paths.}
We use a conditioning RV $Z$ to index conditional paths.
Conditioned on $Z=z$, we obtain a conditional path $\{X_t^Z\}_{t\in[0,1]}$ with conditional density $p_{t\mid Z}(\cdot\mid z)$.
The corresponding marginal path $\{X_t\}_{t\in[0,1]}$ has density $p_t(\cdot)$ satisfying
\begin{equation}
\label{eq:bayes_xt}
\begin{aligned}
p_t(x_t)
\;=\;
\int p_{t\mid Z}(x_t\mid z)\,p_Z(z)\,dz, \qquad
X_t\sim p_t,\ X_t\mid(Z=z)\sim p_{t\mid Z}(\cdot\mid z)
\end{aligned}
\end{equation}

\paragraph{General interpolant.}
We consider a general interpolant between the \emph{marginal} endpoints $X_0$ and $X_1$ specified by scalar functions
$\alpha:[0,1]\to\mathbb{R}$ and $\beta:[0,1]\to\mathbb{R}$:
\begin{equation}
\label{eq:generalInterpolant}
X_t \;=\; \alpha(t)\,X_0 + \beta(t)\,X_1,
\qquad t\in[0,1].
\end{equation}
We assume boundary conditions $\alpha(0)=1,\ \beta(0)=0$ and $\alpha(1)=0,\ \beta(1)=1$, so that $X_{t=0}=X_0$ and $X_{t=1}=X_1$.
If $\alpha,\beta$ are differentiable, then
\begin{equation}
\label{eq:generalInterpolant_dot}
\frac{d}{dt}X_t
\;=\;
\dot{\alpha}(t)\,X_0 + \dot{\beta}(t)\,X_1.
\end{equation}
Conditioned on $Z=z$, the same schedules induce the conditional path
$X_t^Z=\alpha(t)X_0^Z+\beta(t)X_1^Z$ and
$\frac{d}{dt}X_t^Z=\dot{\alpha}(t)X_0^Z+\dot{\beta}(t)X_1^Z$.

\subsection{Flow Matching}
\paragraph{Velocity fields.}
Let $v(x_t,t\mid z)\in\mathbb{R}^d$ denote a \emph{conditional} velocity field that transports $p_{t\mid Z}(\cdot\mid z)$ over time.
The corresponding \emph{marginal} velocity is defined by conditional expectation:
\begin{equation}
\label{eq:marginalV}
\begin{aligned}
v(x_t,t)
\;=\;
\int v(x_t,t\mid z)\,p_{Z\mid t}(z\mid x_t)\,dz 
\;=\; 
\mathbb{E}\!\left[\,v(X_t,t\mid Z)\,\middle|\,X_t=x_t\,\right]
\end{aligned}
\end{equation}

\paragraph{Generation ODE and continuity equation.}
A marginal trajectory follows the ODE
\begin{equation}
\label{eq:ode_marginal}
\frac{d x_t}{dt} = v(x_t,t), \qquad t\in[0,1],
\end{equation}
and similarly a conditional trajectory follows $\frac{d x_t^z}{dt}=v(x_t,t\mid z)$.
The induced marginal density $p_t$ satisfies the continuity (transport) equation
\begin{equation}
\label{eq:continuity_marginal}
\partial_t p_t(x_t) + \nabla\!\cdot\!\big(p_t(x_t)\,v(x_t,t)\big)=0,
\qquad t\in[0,1]
\end{equation}
and conditioned on $Z=z$, $p_{t\mid Z}(\cdot\mid z)$ satisfies
\begin{equation}
\label{eq:continuity_conditional}
\partial_t p_{t\mid Z}(x_t\mid z) + \nabla\!\cdot\!\big(p_{t\mid Z}(x_t\mid z)\,v(x_t,t\mid z)\big)=0,
\ \ t\in[0,1]
\end{equation}

\paragraph{Learning Flow Matching.}
Flow Matching introduces a parameterized model $v^\theta(X_t,t)$ to learn $v(X_t,t)$ by minimizing the marginal loss
\begin{equation}
\label{eq:MFM}
\mathcal{L}_\mathrm{MFM}(\theta) = \mathbb{E}_{t,\;X_t\sim p_t}\,
D\Big(v(X_t,t),\,v^\theta(X_t,t)\Big),
\end{equation}
where $D(\cdot,\cdot)$ is typically a Bregman divergence (e.g., MSE).
Since the marginal velocity $v(X_t,t)$ in Eq.~\eqref{eq:marginalV} is generally intractable, one instead minimizes the conditional loss
\begin{equation}
\label{eq:CFM}
\mathcal{L}_\mathrm{CFM}(\theta) = \mathbb{E}_{t,\;Z,\;X_t\sim p_{t\mid Z}(\cdot \mid Z)}\,
D\Big(v(X_t,t \mid Z),\,v^\theta(X_t,t)\Big)
\end{equation}

\begin{theorem}[Gradient equivalence of Flow Matching~\cite{lipman2024flowmatchingguidecode}]
\label{thm:MequivC_FM}
The gradients of the marginal Flow Matching loss and the conditional Flow Matching loss coincide:
\begin{equation}
\label{eq:marginalEqvconditional_FM}
\nabla_\theta \mathcal{L}_\mathrm{MFM}(\theta)
\;=\;
\nabla_\theta \mathcal{L}_\mathrm{CFM}(\theta).
\end{equation}
In particular, the minimizer of the conditional Flow Matching loss is the marginal velocity $v(x_t,t)$.
\end{theorem}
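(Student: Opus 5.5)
The plan is to use the standard Bregman-divergence argument. Write $D(a,b)=\Phi(a)-\Phi(b)-\langle \nabla\Phi(b),\,a-b\rangle$ for a strictly convex, differentiable $\Phi$; MSE is the case $\Phi(a)=\|a\|^2$. The key observation is that $D(a,b)$ depends on its first argument $a$ in only two ways: through the term $\Phi(a)$, which does not involve $b$, and linearly, through $-\langle \nabla\Phi(b),a\rangle$.

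First I will differentiate the marginal loss in Eq.~\eqref{eq:MFM} under the expectation with $b=v^\theta(X_t,t)$. Since $\Phi(v(X_t,t))$ is $\theta$-independent, this gives
\begin{equation*}
\nabla_\theta \mathcal{L}_\mathrm{MFM}
=\mathbb{E}_{t,X_t}\Big[\nabla_\theta\Big(-\Phi(v^\theta)+\langle\nabla\Phi(v^\theta),v^\theta\rangle\Big)-\nabla_\theta\big\langle \nabla\Phi(v^\theta),\,v(X_t,t)\big\rangle\Big].
\end{equation*}
The same computation applied to Eq.~\eqref{eq:CFM} produces the identical expression, except that the last term contains $v(X_t,t\mid Z)$ in place of $v(X_t,t)$.

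Next I rewrite the conditional expectation $\mathbb{E}_{t,Z,X_t\sim p_{t\mid Z}}$ as $\mathbb{E}_{t,X_t\sim p_t}\mathbb{E}_{Z\mid X_t}$. This is the Bayes identity Eq.~\eqref{eq:bayes_xt}: the joint density $p_{t\mid Z}(x_t\mid z)p_Z(z)$ factors as $p_t(x_t)\,p_{Z\mid t}(z\mid x_t)$. The terms depending only on $(X_t,\theta)$ pass through the inner expectation unchanged. In the last term, $\nabla_\theta\nabla\Phi(v^\theta(X_t,t))$ is a function of $X_t$ alone, so by linearity the inner expectation acts only on $v(X_t,t\mid Z)$. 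By the definition in Eq.~\eqref{eq:marginalV}, that inner expectation equals $v(X_t,t)$. Hence the two gradients coincide.

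For the final claim, the two losses differ by a $\theta$-independent constant, namely $\mathbb{E}[\Phi(v(X_t,t\mid Z))]-\mathbb{E}[\Phi(v(X_t,t))]$. They therefore share their minimizers. The marginal loss is nonnegative and equals zero exactly when $v^\theta=v$ $p_t$-a.e., because $D(a,b)=0$ if and only if $a=b$ for strictly convex $\Phi$. This assumes the model class is expressive enough to contain $v$.

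The main obstacle is technical rather than conceptual. Exchanging $\nabla_\theta$ with the expectations requires integrability and dominated-convergence conditions on $v^\theta$, $\nabla\Phi$ and the conditional velocities, including finite second moments for MSE. I would simply state these as standing regularity assumptions. The one structural point to verify carefully is that $\nabla_\theta\langle\nabla\Phi(v^\theta),a\rangle$ is linear in $a$; this holds because $a$ does not depend on $\theta$.
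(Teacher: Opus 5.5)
Your proof is correct and follows the same route the paper uses for its analogous Transition Flow Matching gradient-equivalence theorem; the Flow Matching statement itself is only cited, not re-proved. The key step in both is that a Bregman divergence depends on its first argument affinely, up to a $\theta$-independent term, so after the swap $\mathbb{E}_{Z}\mathbb{E}_{X_t\mid Z}=\mathbb{E}_{X_t}\mathbb{E}_{Z\mid X_t}$ the conditional velocity can be replaced by $v(X_t,t)$. You expand $D$ explicitly rather than writing $\nabla_\theta D=\nabla_2 D\,\nabla_\theta v^\theta$ with $\nabla_2 D$ affine in its first argument, which also gives you the slightly stronger fact that the two losses differ by a $\theta$-independent constant.
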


\begin{remark}[Standard Flow Matching]
\label{remark:FM}
Flow Matching sets the conditioning variable to be the endpoint pair $Z=(X_0,X_1)$.
Choosing linear schedules $\alpha(t)=1-t$ and $\beta(t)=t$ yields the conditional path
$
X_t^Z=(1-t)X_0+tX_1
$
with constant conditional velocity
$
v(X_t,t\mid Z)=X_1-X_0.
$
Thus Eq.~\eqref{eq:CFM} reduces to
\begin{equation}
\begin{aligned}
\mathcal{L}_\mathrm{FM}(\theta)
=
\mathbb{E}_{t,\;X_0\sim p_0,\;X_1\sim p_1}
D\Big(X_1-X_0,\;v^\theta(X_t,t)\Big),
X_t=(1-t)X_0+tX_1
\end{aligned}
\end{equation}
\end{remark}

% =========================
% Method
% =========================
\section{Method}
\label{sec:method}

% \subsection{Overview: learning a transition operator instead of a local velocity}
Flow Matching learns a time-dependent velocity field $v(x_t,t)$ and generating by integrating the ODE in Eq.~\eqref{eq:ode_marginal}.
Our goal is to directly learn a \emph{transition flow}:
\[
X^\theta(x_t,t,r)\;:\;\mathbb{R}^d\times[0,1]\times[0,1]\to\mathbb{R}^d,\quad 0\le t\le r\le 1,
\]
that maps a state $x_t$ at time $t$ to the \emph{future} state at time $r$ along the same transport dynamics.
At inference time, this enables stepping across an arbitrary time grid by repeatedly applying $X^\theta(\cdot)$, without explicitly integrating $v(\cdot)$.

\subsection{Transition Flow Identity}
\paragraph{Average velocity between two time steps.}
Given the marginal velocity $v(x_\tau,\tau)$ as shown in Eq.\eqref{eq:marginalV}, define the average velocity $u(x_t,t,r)$ for any $0\le t\le r\le 1$:
\begin{equation}
\label{eq:averageV}
(r-t)u(x_t,t,r)
=
x_{t\to r}-x_t
=
\int_t^r v(x_\tau,\tau)\,d\tau .
\end{equation}
Here $x_t$ is the current state at time $t$, and $x_{t\to r}$ denotes the marginal transition state at time $r$ obtained by evolving from $x_t$.

\paragraph{Marginal/conditional transition states.}
Analogous to Eq.~\eqref{eq:marginalV}, the marginal transition state can be expressed as a conditional expectation:
\begin{equation}
\label{eq:marginalstate}
x_{t \to r}
=
\int x_{t \to r}^z \; p_{Z\mid t}(z\mid x_t)\,dz
=
\mathbb{E}\!\left[\,X_{t \to r}^Z \,\middle|\, X_t=x_t\,\right],
\end{equation}
where the conditional transition state is
$
x_{t \to r}^z
=
x_t+\int_t^r v(x_\tau,\tau\mid z)\,d\tau.
$

\paragraph{Transition flow.}
We define the (marginal) transition flow $X(x_t,t,r)$ as the mapping that returns the marginal transition state:
\begin{equation}
\label{eq:marginalTransitionFlow}
\begin{aligned}
X(x_t,t,r)
=
\int X(x_t,t,r\mid z)\;p_{Z\mid t}(z\mid x_t)\,dz 
=
\mathbb{E}\!\left[\,X(X_t,t,r\mid Z)\,\middle|\,X_t=x_t\,\right]
\end{aligned}
\end{equation}
where $X(x_t,t,r\mid z)=x_{t\to r}^z$.
Combining Eq.~\eqref{eq:averageV} with the definition of $X(\cdot)$ yields
\begin{equation}
\label{eq:u2x}
u(x_t,t,r)
=
\frac{x_{t\to r}-x_t}{r-t}
=
\frac{X(x_t,t,r)-x_t}{r-t}.
\end{equation}

\paragraph{Transition Flow Identity.}
Differentiating Eq.~\eqref{eq:averageV} with respect to $t$ (treating $r$ as independent of $t$) gives
\begin{equation}
\label{eq:u_identity}
u(x_t,t,r)
=
v(x_t,t)
+
(r-t)\frac{d}{dt}u(x_t,t,r).
\end{equation}
Substituting Eq.~\eqref{eq:u2x} into Eq.~\eqref{eq:u_identity} yields the following key identity:
\begin{equation}
\label{eq:transitionIdentity}
\boxed{
X(x_t,t,r) = x_{t \to r} + (r-t) \frac{d}{dt} X(x_t,t,r)
}
\end{equation}
We defer the detailed algebraic derivation to the Appendix.
% We defer the detailed algebraic derivation to Appendix~\S\ref{proof:sec:TranFlowIden}.

\subsection{Computing the time derivative of a transition flow}
To make Eq.~\eqref{eq:transitionIdentity} actionable for learning, we expand the total derivative $\frac{d}{dt}X(x_t,t,r)$ as:
\begin{equation}
\label{eq:X_timederivative}
\begin{aligned}
\frac{d}{dt} X(x_t,t,r)
&=
\partial_{x_t}X \cdot \frac{d x_t}{dt}
+
\partial_{t}X \cdot \frac{d t}{dt}
+
\partial_{r}X \cdot \frac{d r}{dt} \\
&=
\partial_{x_t}X \cdot v(x_t,t)
+
\partial_{t}X
\end{aligned}
\end{equation}
where $\frac{d x_t}{dt}=v(x_t,t)$ by Eq.~\eqref{eq:ode_marginal} and $\frac{dr}{dt}=0$.
In practice, Eq.~\eqref{eq:X_timederivative} is given by the Jacobian-vector product (JVP) between the Jacobian matrix of each function ($[\partial_{x_t}X, \partial_{t}X, \partial_{r}X]$) and the corresponding tangent vector ($[v, 0, 1]$). For code implementation, modern libraries such as PyTorch provide efficient JVP calculation interfaces.

\subsection{Transition Flow Matching objectives}
\paragraph{Model parameterization.}
We introduce a Transition Flow model $X^\theta(x_t,t,r)$ to model $X(x_t,t,r)$.
We use $\mathrm{sg}[\cdot]$ to denote a stop-gradient operator (i.e., the argument is treated as a constant target during optimization), and $D(\cdot,\cdot)$ is a Bregman divergence (e.g., MSE).

\paragraph{Intractable marginal objective.}
Ideally, we would enforce Eq.~\eqref{eq:transitionIdentity} by minimizing the marginal Transition Flow Matching objective (M-TFM):
\begin{equation}
\label{eq:MTFM}
\begin{aligned}
\mathcal{L}_\mathrm{M\text{-}TFM}(\theta)
\;=\;
\mathbb{E}_{t,\;r,\;X_t\sim p_t}\,
D\Big(\mathrm{sg}\big[X_{\mathrm{tgt}}^{\mathrm{m}}(X_t,t, r)\big],\,X^\theta(X_t,t, r)\Big),
\end{aligned}
\end{equation}
with target
\begin{equation}
\label{eq:marginalXtgt}
X_{\mathrm{tgt}}^{\mathrm{m}}(X_t,t, r)
=
X_{t \to r}
+
(r-t) \frac{d}{dt} X^\theta(X_t,t,r).
\end{equation}
However, the marginal transition state $X_{t\to r}$ and the marginal velocity $v(x_t,t)$ (needed inside $\frac{d}{dt}X^\theta$ via Eq.~\eqref{eq:X_timederivative}) are generally intractable, hence Eq.~\eqref{eq:MTFM} cannot be evaluated directly.

\paragraph{Tractable conditional objective.}
Instead, we minimize a conditional Transition Flow Matching objective (C-TFM):
\begin{equation}
\label{eq:CTFM}
\begin{aligned}
\mathcal{L}_\mathrm{C\text{-}TFM}(\theta)
\; = \;
\mathbb{E}_{t,\;r,\;Z,\;X_t\sim p_{t\mid Z}(\cdot \mid Z)}\, 
D\Big(\mathrm{sg}\big[X_{\mathrm{tgt}}^{\mathrm{c}}(X_t,t, r \mid Z)\big],\,X^\theta(X_t,t, r)\Big),
\end{aligned}
\end{equation}
where the conditional target is
\begin{equation}
\label{eq:conditionalXtgt}
X_{\mathrm{tgt}}^{\mathrm{c}}(X_t,t, r \mid Z)
=
X_{t \to r}^Z
+
(r-t) \frac{d}{dt} X^\theta(X_t,t,r).
\end{equation}
Crucially, under appropriate choices of $Z$ and the conditional path construction, both the conditional transition state $X_{t\to r}^Z$ and the conditional velocity $v(x_t,t\mid Z)$ become tractable, which makes $\frac{d}{dt}X^\theta$ computable via Eq.~\eqref{eq:X_timederivative}.

\begin{theorem}[Gradient equivalence of Transition Flow Matching]
\label{thm:MequivC}
% see the proof in Appendix~\S\ref{proof:sec:M2C}.
See the proof in the Appendix.
The gradients of the marginal and conditional Transition Flow Matching losses coincide:
\begin{equation}
\label{eq:marginalEqvconditional}
\nabla_\theta \mathcal{L}_\mathrm{M\text{-}TFM}(\theta)
\;=\;
\nabla_\theta \mathcal{L}_\mathrm{C\text{-}TFM}(\theta).
\end{equation}
In particular, the minimizer of the conditional objective regresses $X^\theta(X_t,t,r)$ toward the marginal target
$X_{\mathrm{tgt}}^{\mathrm{m}}(X_t,t,r)$, which is defined to satisfy the Transition Flow Identity in Eq.~\eqref{eq:transitionIdentity}.
\end{theorem}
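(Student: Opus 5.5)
The plan is to mirror the proof of Theorem~\ref{thm:MequivC_FM}: exploit the structure of a Bregman divergence together with the fact that, under the stop-gradient, the conditional target is \emph{affine} in the conditional quantities $X^Z_{t\to r}$ and $v(X_t,t\mid Z)$. Write $D(a,b)=\phi(a)-\phi(b)-\langle\nabla\phi(b),a-b\rangle$ for a strictly convex $\phi$. Because of $\mathrm{sg}[\cdot]$, $\theta$ enters only through the second argument $b=X^\theta(X_t,t,r)$. Hence
\[
\nabla_\theta D(a,X^\theta)=\big(\partial_\theta X^\theta\big)^{\!\top}\nabla_b D(a,b)\big|_{b=X^\theta},\qquad \nabla_b D(a,b)=-\nabla^2\phi(b)\,(a-b),
\]
which is affine in the target $a$. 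This linearity is what allows a conditional target to be replaced by its conditional expectation inside the gradient.

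First, I will make the conditional target precise. In the C-TFM loss, the total derivative in Eq.~\eqref{eq:conditionalXtgt} is computed by Eq.~\eqref{eq:X_timederivative} with the tangent $v(X_t,t\mid Z)$ in place of $v(X_t,t)$. This gives
\[
X^{\mathrm c}_{\mathrm{tgt}}=X^Z_{t\to r}+(r-t)\big(\partial_{x_t}X^\theta\, v(X_t,t\mid Z)+\partial_t X^\theta\big).
\]
Here $\partial_{x_t}X^\theta$ and $\partial_t X^\theta$ depend only on $(X_t,t,r)$, not on $Z$.

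Second, I will take conditional expectations given $X_t=x_t$, with $t,r$ fixed. By Eq.~\eqref{eq:marginalstate}, $\mathbb E[X^Z_{t\to r}\mid X_t=x_t]=x_{t\to r}$. By Eq.~\eqref{eq:marginalV}, $\mathbb E[v(X_t,t\mid Z)\mid X_t=x_t]=v(x_t,t)$. Pulling out the $Z$-independent Jacobians, I obtain $\mathbb E[X^{\mathrm c}_{\mathrm{tgt}}\mid X_t]=X^{\mathrm m}_{\mathrm{tgt}}(X_t,t,r)$, the target of Eq.~\eqref{eq:marginalXtgt}.

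Third, I will write $\nabla_\theta\mathcal L_{\mathrm{C\text{-}TFM}}$ as an expectation over $t,r$ and over the joint law of $(Z,X_t)$. By Eq.~\eqref{eq:bayes_xt} this law equals $p_t(x_t)\,p_{Z\mid t}(z\mid x_t)$, so I can apply the tower property and condition on $X_t$. Since the gradient integrand is affine in the target, with coefficients $-(\partial_\theta X^\theta)^\top\nabla^2\phi(X^\theta)$ depending only on $X_t$, the inner expectation replaces $X^{\mathrm c}_{\mathrm{tgt}}$ by $X^{\mathrm m}_{\mathrm{tgt}}$. This yields exactly $\nabla_\theta\mathcal L_{\mathrm{M\text{-}TFM}}$, which proves Eq.~\eqref{eq:marginalEqvconditional}. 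The interchange of $\nabla_\theta$ and expectation needs standard integrability and smoothness assumptions, which I will state rather than verify.

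For the ``in particular'' claim, I will use the Bregman mean property: for fixed $x_t$, the quantity $\mathbb E[D(A,b)\mid X_t=x_t]$ is minimized over $b$ at $b=\mathbb E[A\mid X_t=x_t]$. Combined with the previous step, this shows that any stationary point in a sufficiently expressive model class satisfies $X^\theta=X^{\mathrm m}_{\mathrm{tgt}}$ pointwise. At such a fixed point, $X^\theta$ obeys Eq.~\eqref{eq:transitionIdentity}.

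The main obstacle is the second step. The JVP term contains the velocity, and the argument only works if the conditional target uses the conditional velocity linearly and the Jacobians are evaluated at the same $(X_t,t,r)$ and treated as constants under $\mathrm{sg}$. I will state this interpretation of $\tfrac{d}{dt}X^\theta$ in the conditional loss explicitly. A nonlinear dependence on $v(\cdot\mid Z)$, such as a normalized loss weight, would break the equivalence.
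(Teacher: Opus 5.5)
Your proposal is correct and follows the same route as the paper's appendix proof. Both show $\mathbb{E}[X^{\mathrm c}_{\mathrm{tgt}}\mid X_t]=X^{\mathrm m}_{\mathrm{tgt}}$ using linearity in $X^Z_{t\to r}$ and $v(X_t,t\mid Z)$ with $Z$-independent Jacobians, use the fact that $\nabla_2 D(a,b)$ is affine in $a$ for a Bregman divergence, and swap the sampling order via Bayes' rule; you simply run the chain from the conditional side rather than the marginal side, and your caveat that the $Z$-dependent adaptive weight $\mathrm{sg}(w)$ used in the experiments breaks the exact equivalence is apt.
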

% The full proof (including the role of Bregman divergences and the marginal--conditional relations) is provided in Appendix~\ref{proof:transitionIdentity}.

\subsection{Training and inference procedure} \label{sec:training_and_infer}
\begin{remark}[Standard Transition Flow Matching]
\label{remark:TFM}
We adopt the standard Flow Matching setting in Remark~\ref{remark:FM} by taking
$Z=(X_0,X_1)$ and using the linear interpolant
\[
X_t=(1-t)X_0+tX_1,\qquad 0\le t\le 1.
\]
For any $0\le t\le r\le 1$, the conditional transition state is
\[
X_{t\to r}^Z=(1-r)X_0+rX_1,
\]
and the conditional velocity is constant:
\[
v(X_t,t\mid Z)=X_1-X_0.
\]
Under this instantiation, the tractable Transition Flow Matching objective in Eq.~\eqref{eq:CTFM} becomes
\begin{equation}
\label{eq:STFM}
\begin{aligned}
\mathcal{L}_\mathrm{TFM}(\theta)
=
\mathbb{E}_{t,\;r,\;Z,\;X_t\sim p_{t\mid Z}(\cdot \mid Z)}\, 
D\Big(\mathrm{sg}\big[X_{t \to r}^Z + (r-t)\frac{d}{dt} X^\theta(X_t,t,r)\big],\,X^\theta(X_t,t, r)\Big),
\end{aligned}
\end{equation}
The time derivative term $\frac{d}{dt} X^\theta(X_t,t,r)$ extends Eq.~\eqref{eq:X_timederivative} by incorporating the tractable conditional velocity, and is given by (see the proof in Appendix):
% The time derivative term $\frac{d}{dt} X^\theta(X_t,t,r)$ extends Eq.~\eqref{eq:X_timederivative} by incorporating the tractable conditional velocity, and is given by (see the proof in Appendix~\S\ref{proof:sec:M2C}):
\begin{equation}
\label{eq:condTimeDX}
\begin{aligned}
\frac{d}{dt} X^\theta(X_t,t,r)
\;=\; 
\partial_{x_t}X^\theta(X_t,t,r)\cdot v(X_t, t \mid Z)
+
\partial_{t}X^\theta(X_t,t,r).
\end{aligned}
\end{equation}
\end{remark}

During inference, we recursively apply the model to transform the generation trajectory from the source distribution to the target distribution:
\begin{equation}
\hat{x_r} = X^\theta (x_t, t, r)
\end{equation}
For clarity, we summarize the conceptual training procedures in the Algorithm.\ref{alg:tfm-training-short}. 
% Note that in practice, we need some tricks to stabilize the training procedure, which we showcase in Appendix~\S\ref{}.

\begin{algorithm}[t]
\caption{Transition Flow Matching (TFM): Training}
\label{alg:tfm-training-short}

\setlength{\fboxsep}{8pt}
\noindent
\fcolorbox{CodeFrame}{CodeBG}{
\begin{minipage}{0.933\linewidth}
\ttfamily\small
\cm{// jvp returns (output, JVP)}\\
\cm{// tfm predicts $X^\theta(x_t,t,r)$}\\[2pt]
x\_1 = \fn{sample\_batch}(), \;
x\_0 = \fn{randn\_like}(x\_1), \;
(t,r)=\fn{sample\_t\_r}() \hfill \cm{// $0\le t\le r\le1$}\\[3pt]
x\_t = (1-t)x\_0 + t x\_1,\;
v = x\_1 - x\_0\\
x\_{t\_to\_r} = (1-r)x\_0 + r x\_1\\[3pt]
(X, dX\_dt) = \fn{jvp}(\model{tfm}, (x\_t,t,r),(v,1,0))\\
X\_tgt = x\_{t\_to\_r} + (r-t)dX\_dt\\
loss = \fn{metric}(X - \fn{stopgrad}(X\_tgt))\\
\textbf{return} loss
\end{minipage}
}
\end{algorithm}

% \subsection{Classifier-Free Guidance}
% Transition Flow Matching naturally supports classifier-free guidance (CFG) as:
% \begin{equation}
% \begin{aligned}
% X^\theta_\text{cfg}(x_t,t,r \mid \textbf{c}) = \omega X(x_t,t,r \mid \textbf{c}) + (1-\omega) X(x_t,t,r \mid \varnothing) 
% \end{aligned}
% \end{equation}
% which is the combination of class-conditional output $X(x_t,t,r \mid \textbf{c})$ and unconditional output $X(x_t,t,r \mid \varnothing)$, so that we can modify $\omega$ to adjust the class-conditional guidance in the inference time. Note in here the condition refers to class-condition, different from the meaning of condition (with marginal) in previous sections.
% Follow the common practice of CFG, we train $X^\theta(\cdot)$ that both support class-condition generation $X^\theta(\cdot \mid \textbf{c})$ and uncondition generation $X^\theta(\cdot \mid \varnothing)$. 
% For conditional training, the data point $X_1$ in Sec.~\ref{sec:training_and_infer} are sampled from the class-condition target distribution $p_1(\cdot \mid \textbf{c})$, in contrast, for unconditional training, the data point $X_1$ in Sec.~\ref{sec:training_and_infer} are sampled from the unconditional target distribution $p_1(\cdot)$,

\subsection{Classifier-Free Guidance}
Transition Flow Matching naturally supports classifier-free guidance (CFG) via
\begin{equation}
\begin{aligned}
X^\theta_\text{cfg}(x_t,t,r \mid \textbf{c}) = \omega X(x_t,t,r \mid \textbf{c}) + (1-\omega) X(x_t,t,r \mid \varnothing) .
\end{aligned}
\end{equation}
This forms a combination of the class-conditional transition flow $X(x_t,t,r \mid \textbf{c})$ and the unconditional transition flow $X(x_t,t,r \mid \varnothing)$, allowing us to control the strength of class conditioning at inference time by tuning $\omega$. Here, the term ``condition'' refers specifically to class conditioning, which is distinct from the marginal/conditional distinction used in previous sections.
Following standard CFG practice~\cite{ho2022classifierfreediffusionguidance}, we train a single model $X^\theta(\cdot)$ that supports both class-conditional generation $X^\theta(\cdot \mid \textbf{c})$ and unconditional generation $X^\theta(\cdot \mid \varnothing)$. Concretely, for conditional training, the endpoint $X_1$ in Sec.~\ref{sec:training_and_infer} is sampled from the class-conditional target distribution $p_1(\cdot \mid \textbf{c})$; for unconditional training, $X_1$ in Sec.~\ref{sec:training_and_infer} is sampled from the unconditional target distribution $p_1(\cdot)$.

% \begin{equation}
% \begin{aligned}
% X(X_t,t,r \mid \textbf{c}) &= X_{t\to r}^{Z,c} + (r-t)\frac{d}{dt} X^\theta(X_t,t,r \mid \textbf{c}) \\
% \frac{d}{dt} X^\theta(X_t,t,r \mid \textbf{c}) &=  \partial_{x_t}X^\theta(X_t,t,r \mid \textbf{c})\cdot v(X_t, t \mid Z,\textbf{c}) + \partial_{t}X^\theta(X_t,t,r \mid \textbf{c})
% \end{aligned}
% \end{equation}

% \begin{equation}
% \begin{aligned}
% X(X_t,t,r) = X_{t\to r}^Z + (r-t)\frac{d}{dt} X^\theta(X_t,t,r)
% \end{aligned}
% \end{equation}

% \section{Discussion}

\section{Implementation Design}
\label{sec:implementation}

% Required packages in preamble:
% \usepackage{graphicx}

\begin{figure*}[t]
  \centering

  % ---------------- Row 1: Block 1 and Block 2 ----------------
  % Block 1
  \begin{minipage}[t]{0.49\textwidth}
    \centering
    \begin{minipage}[t]{0.33\linewidth}
      \centering
      \includegraphics[width=\linewidth]{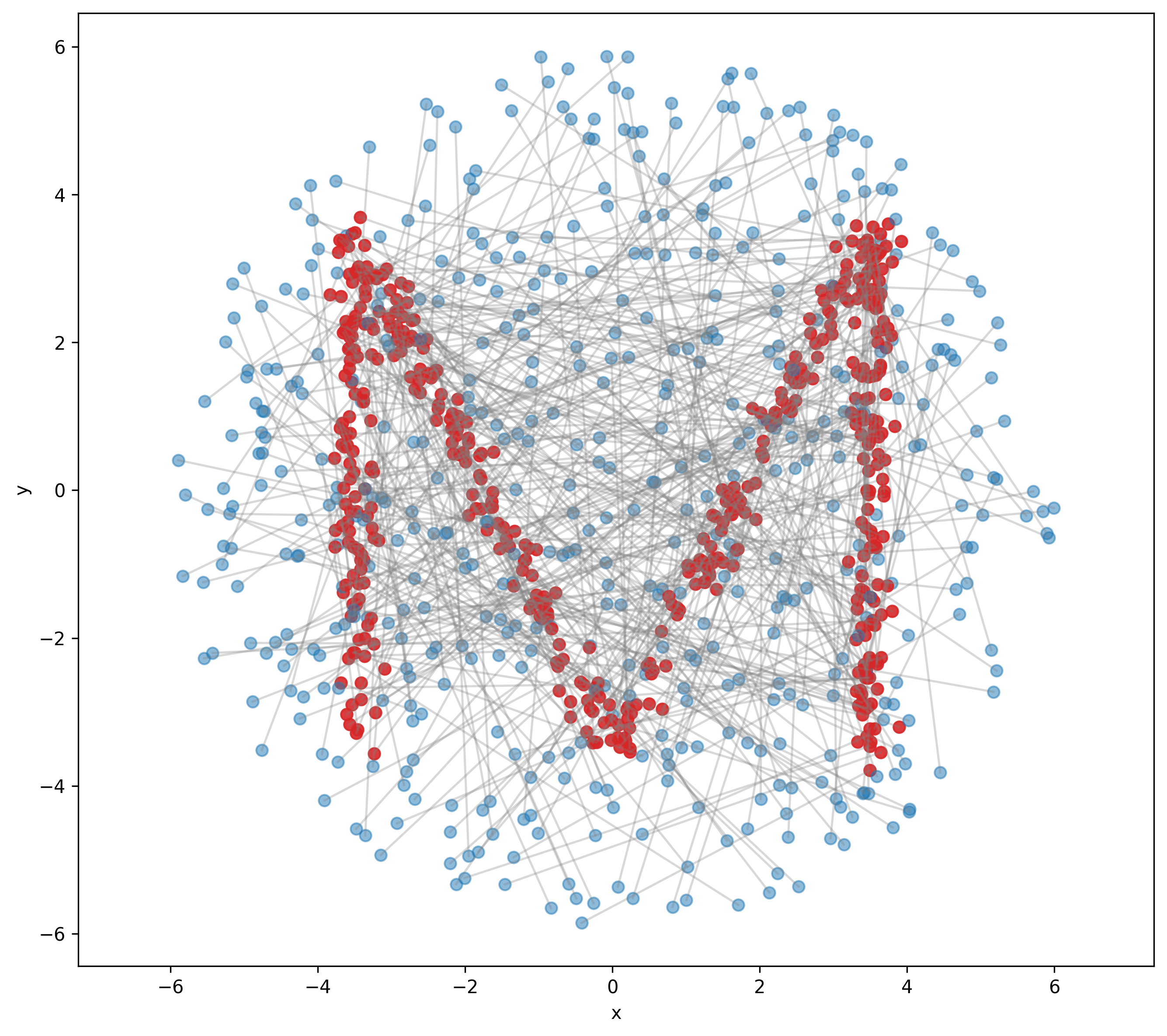}\\[-2pt]
      {\scriptsize Ground Truth}
    \end{minipage}\hfill
    \begin{minipage}[t]{0.33\linewidth}
      \centering
      \includegraphics[width=\linewidth]{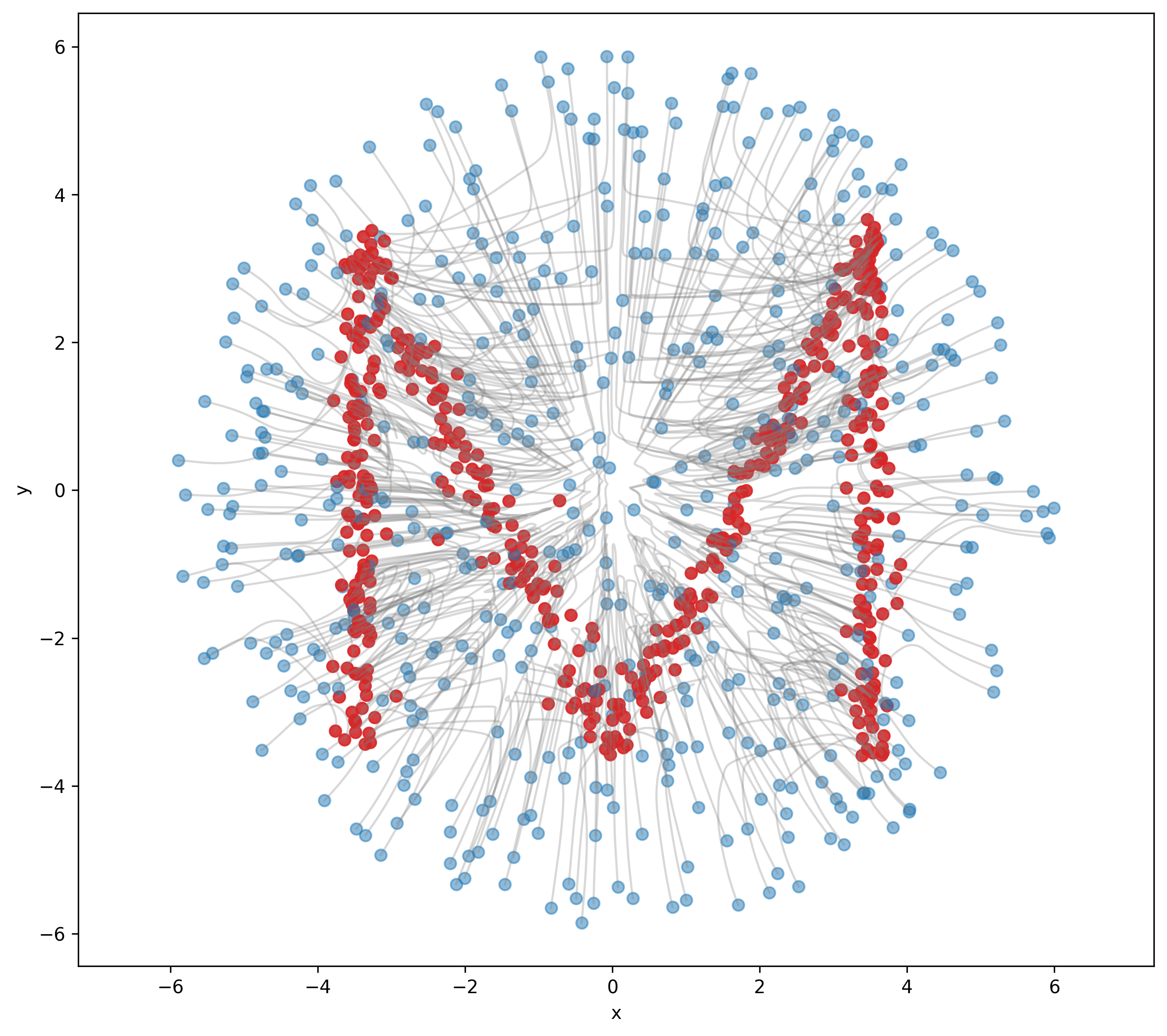}\\[-2pt]
      {\scriptsize Flow Matching}
    \end{minipage}\hfill
    \begin{minipage}[t]{0.33\linewidth}
      \centering
      \includegraphics[width=\linewidth]{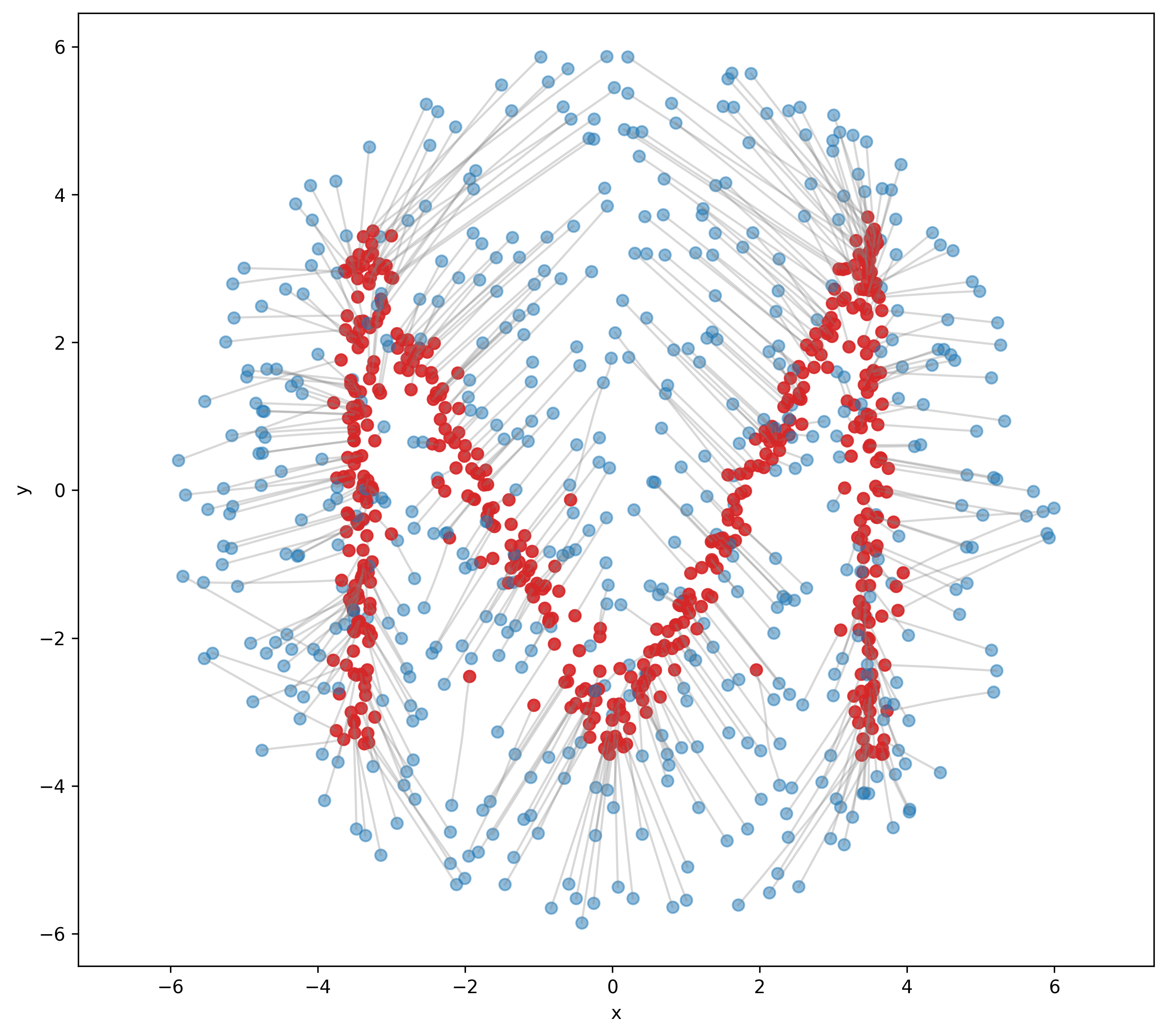}\\[-2pt]
      {\scriptsize Rectified Flow}
    \end{minipage}
  \end{minipage}\hfill
  % Block 2
  \begin{minipage}[t]{0.49\textwidth}
    \centering
    \begin{minipage}[t]{0.33\linewidth}
      \centering
      \includegraphics[width=\linewidth]{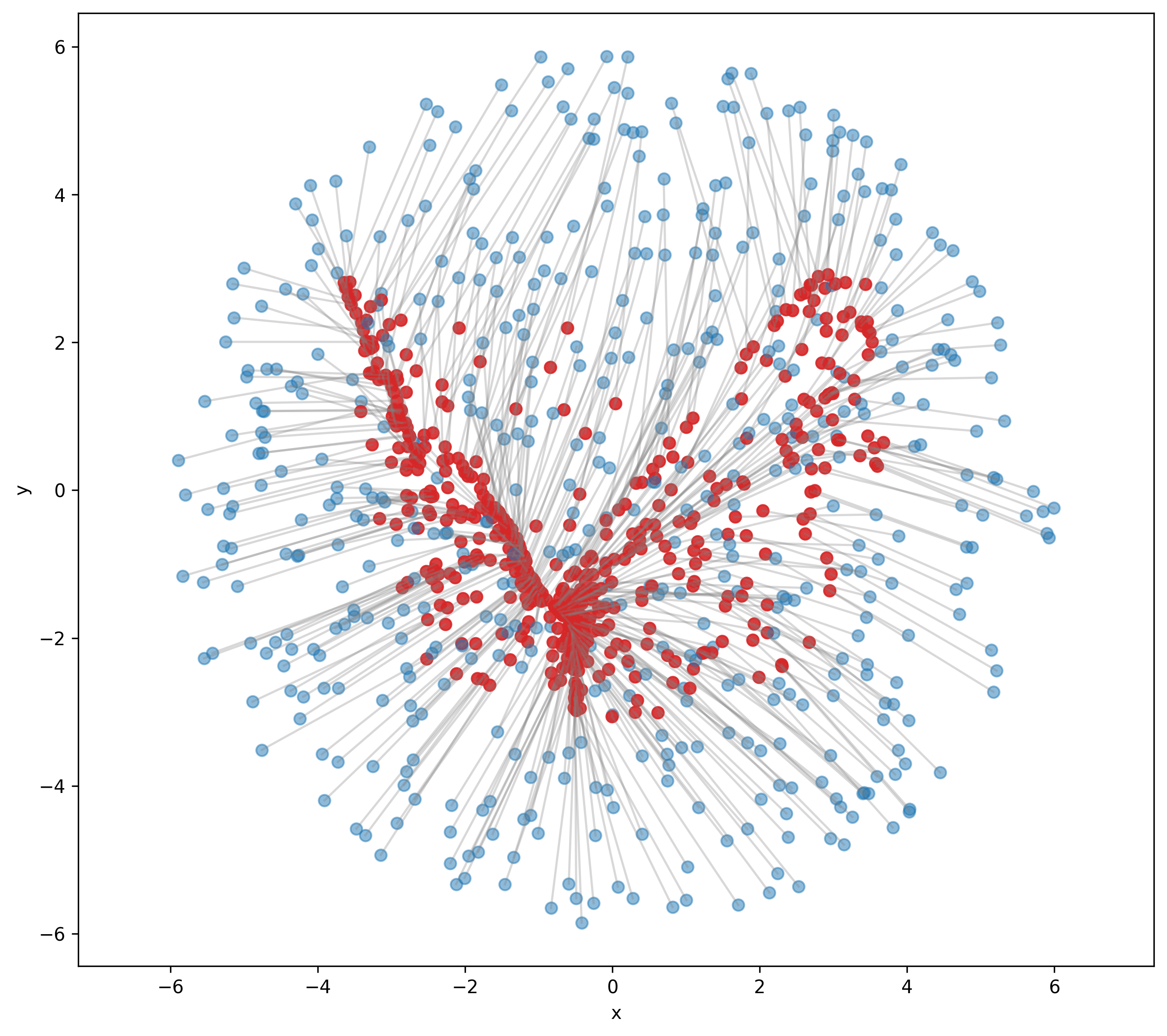}\\[-2pt]
      {\scriptsize Flow map (steps=1)}
    \end{minipage}\hfill
    \begin{minipage}[t]{0.33\linewidth}
      \centering
      \includegraphics[width=\linewidth]{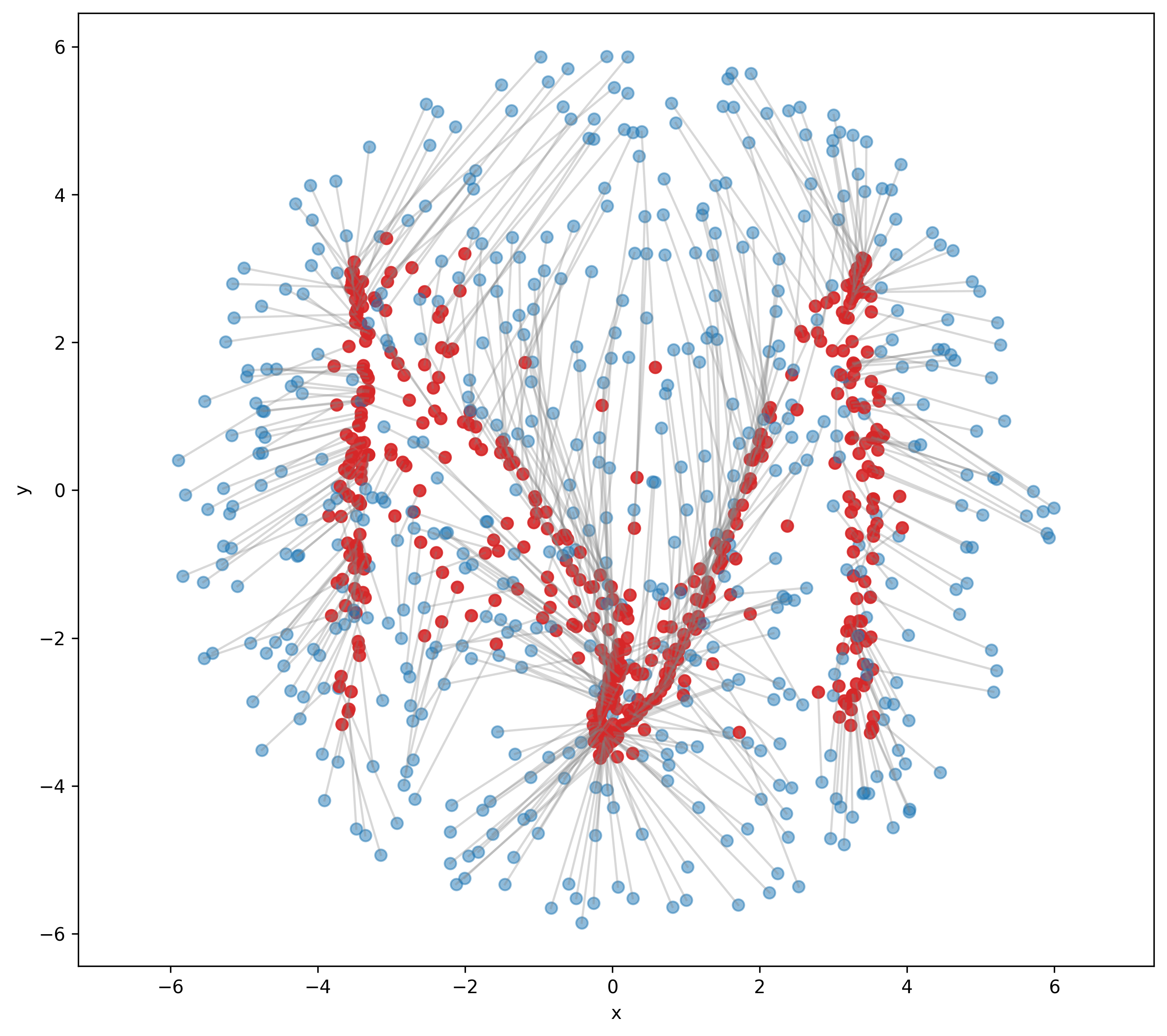}\\[-2pt]
      {\scriptsize MeanFlow (steps=1)}
    \end{minipage}\hfill
    \begin{minipage}[t]{0.33\linewidth}
      \centering
      \includegraphics[width=\linewidth]{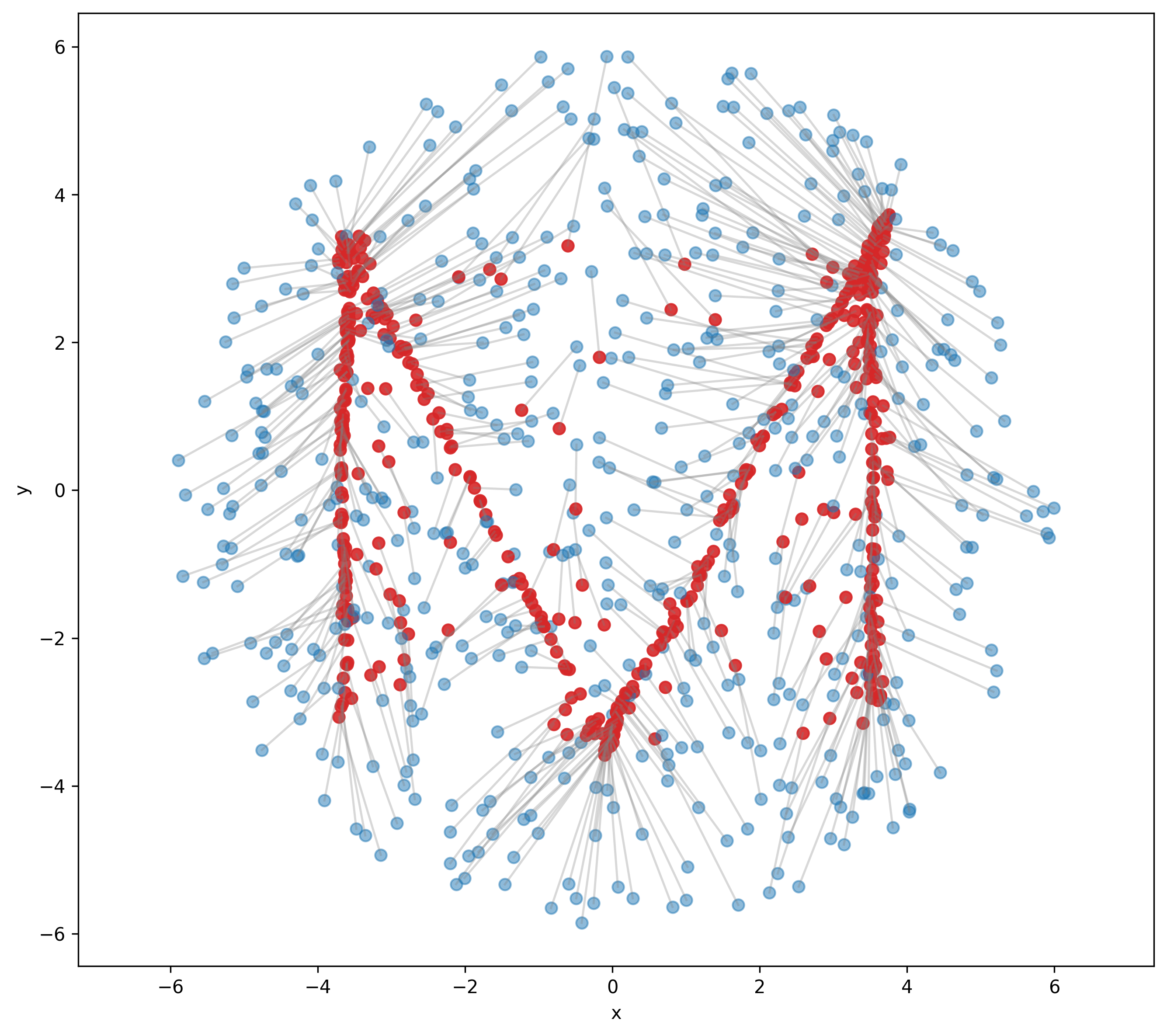}\\[-2pt]
      {\scriptsize TFM (steps=1)}
    \end{minipage}
  \end{minipage}

  \vspace{-1em}

  % ---------------- Row 2: Block 3 and Block 4 ----------------
  % Block 3
  \begin{minipage}[t]{0.49\textwidth}
    \centering
    \begin{minipage}[t]{0.33\linewidth}
      \centering
      \includegraphics[width=\linewidth]{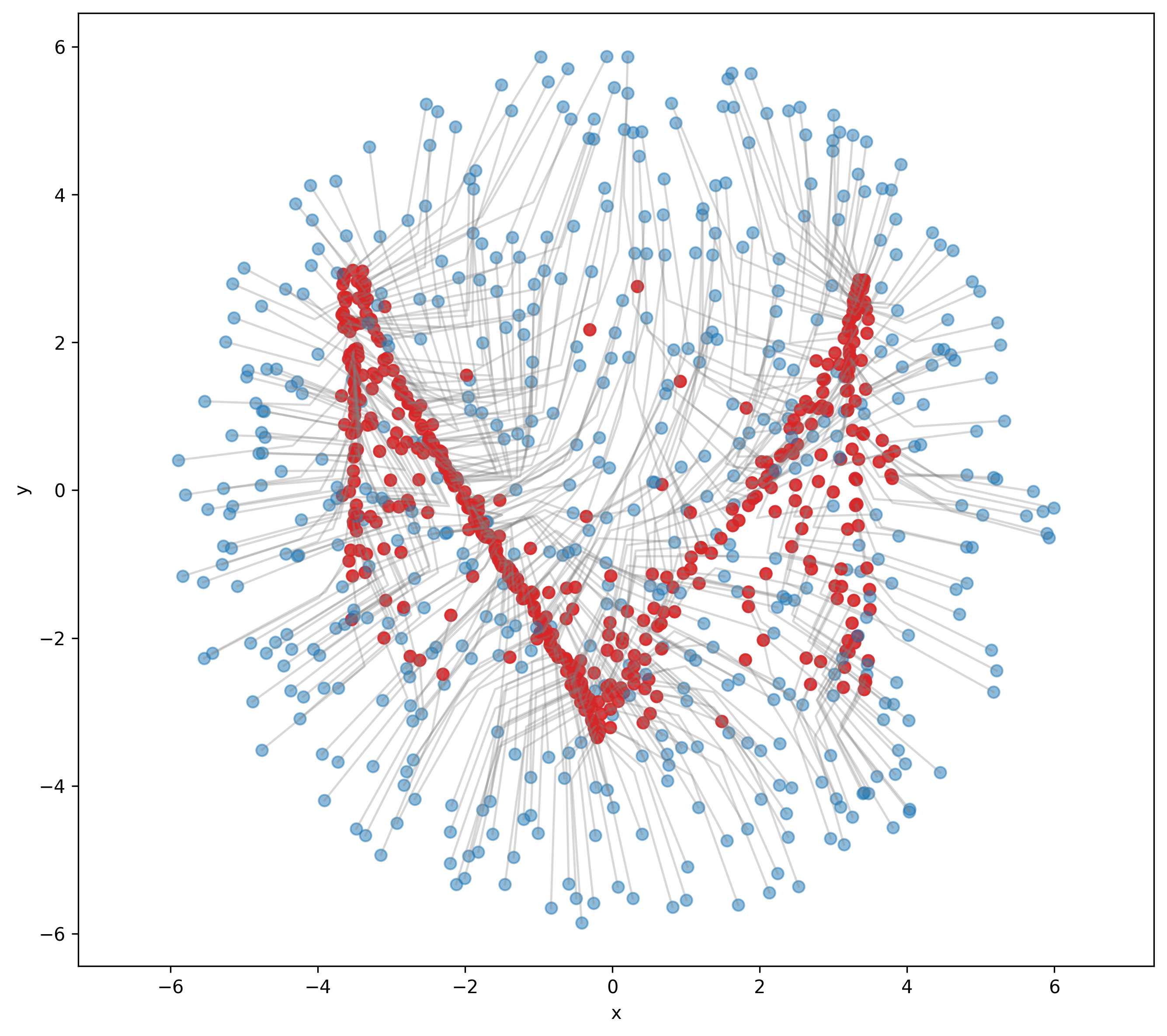}\\[-2pt]
      {\scriptsize Flow map (steps=2)}
    \end{minipage}\hfill
    \begin{minipage}[t]{0.33\linewidth}
      \centering
      \includegraphics[width=\linewidth]{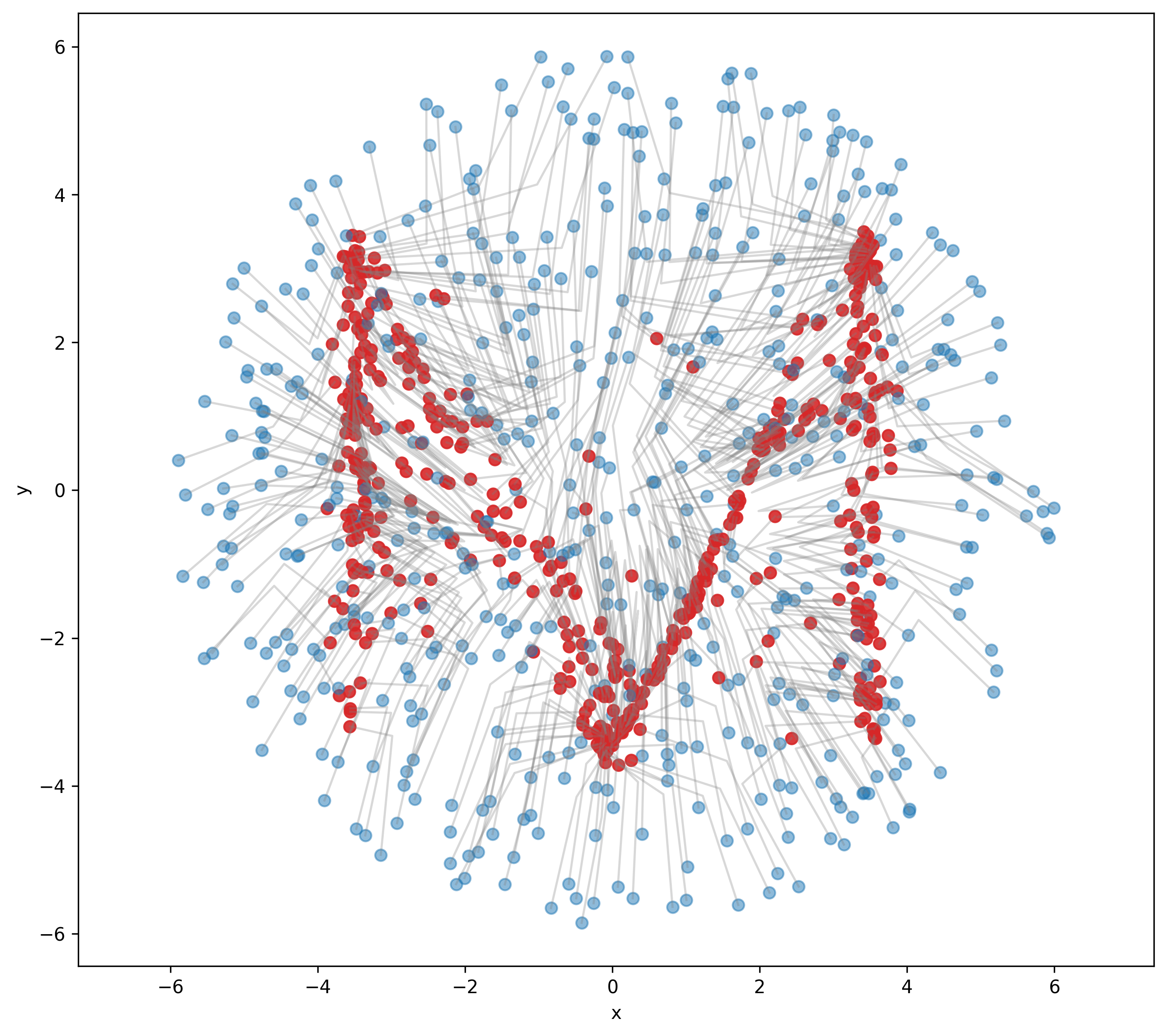}\\[-2pt]
      {\scriptsize MeanFlow (steps=2)}
    \end{minipage}\hfill
    \begin{minipage}[t]{0.33\linewidth}
      \centering
      \includegraphics[width=\linewidth]{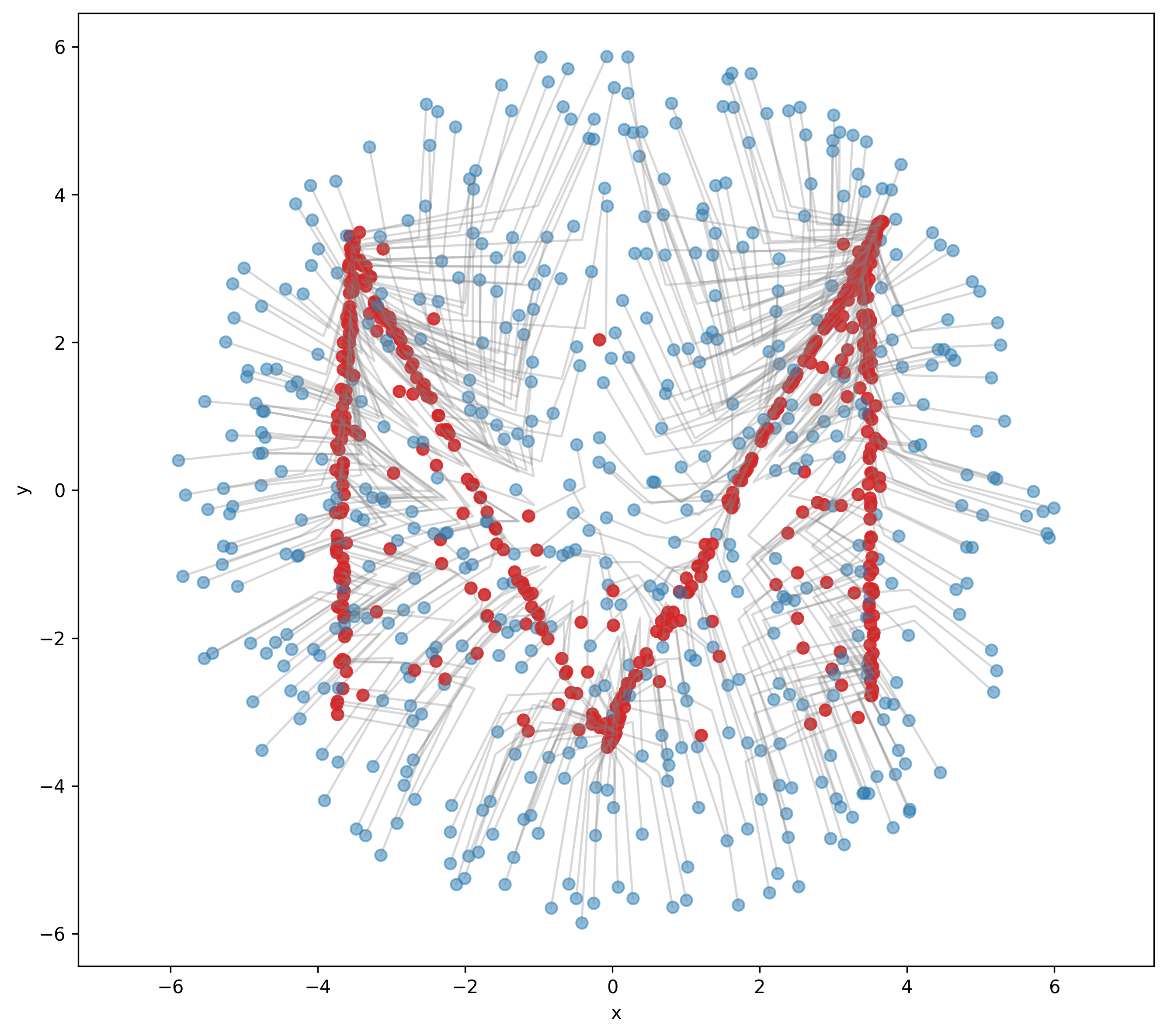}\\[-2pt]
      {\scriptsize TFM (steps=2)}
    \end{minipage}
  \end{minipage}\hfill
  % Block 4
  \begin{minipage}[t]{0.49\textwidth}
    \centering
    \begin{minipage}[t]{0.33\linewidth}
      \centering
      \includegraphics[width=\linewidth]{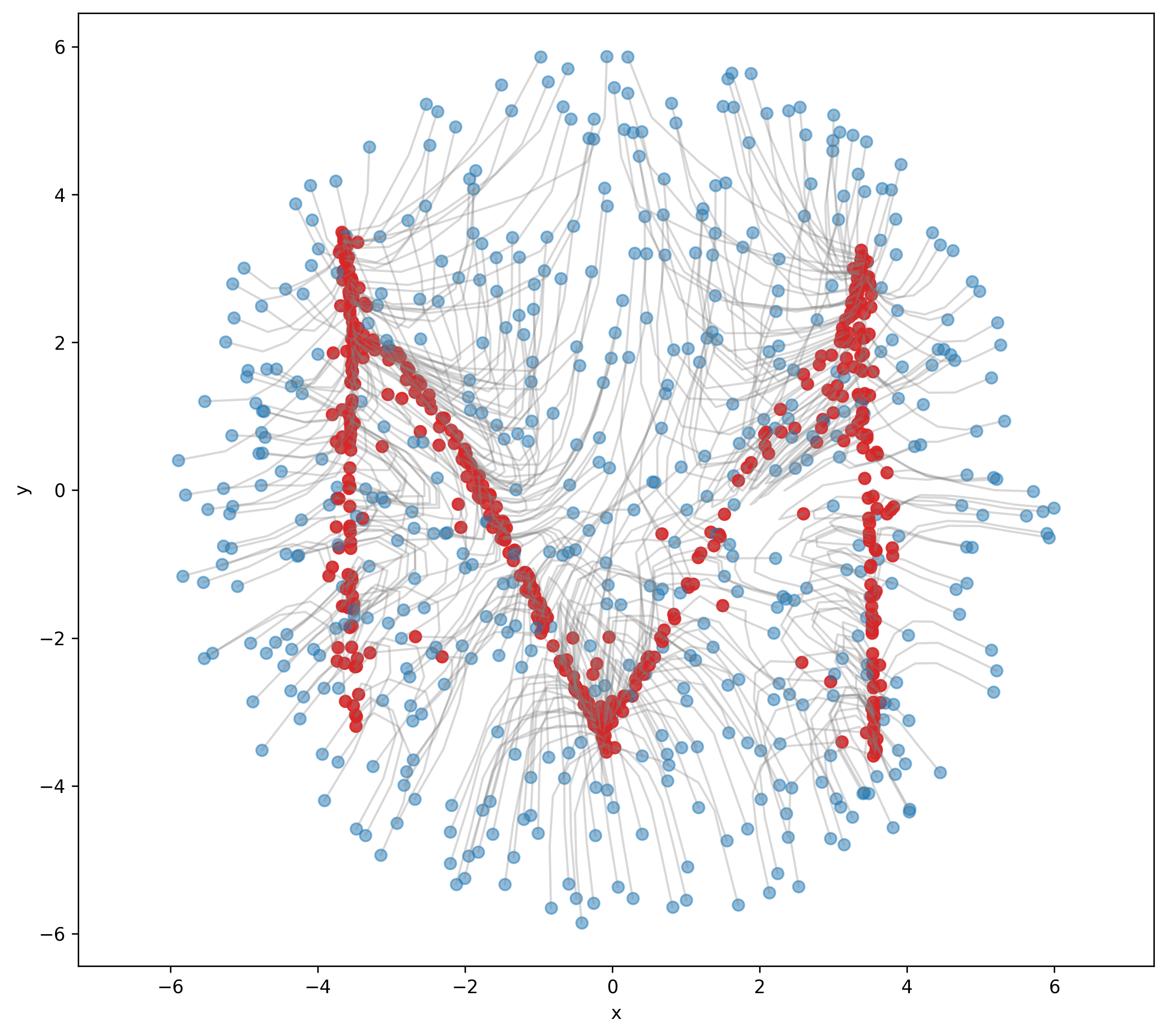}\\[-2pt]
      {\scriptsize Flow map (steps=5)}
    \end{minipage}\hfill
    \begin{minipage}[t]{0.33\linewidth}
      \centering
      \includegraphics[width=\linewidth]{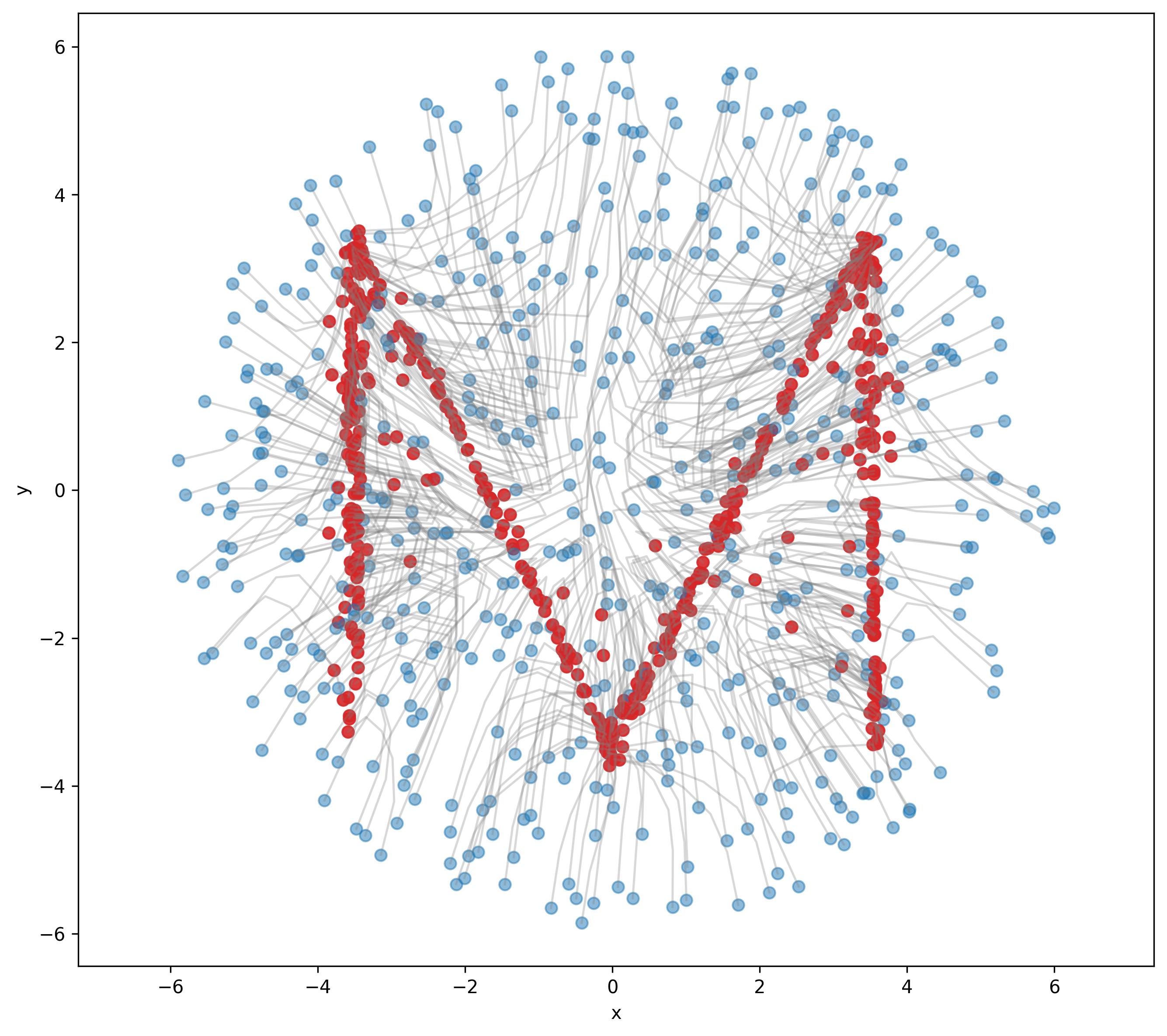}\\[-2pt]
      {\scriptsize MeanFlow (steps=5)}
    \end{minipage}\hfill
    \begin{minipage}[t]{0.33\linewidth}
      \centering
      \includegraphics[width=\linewidth]{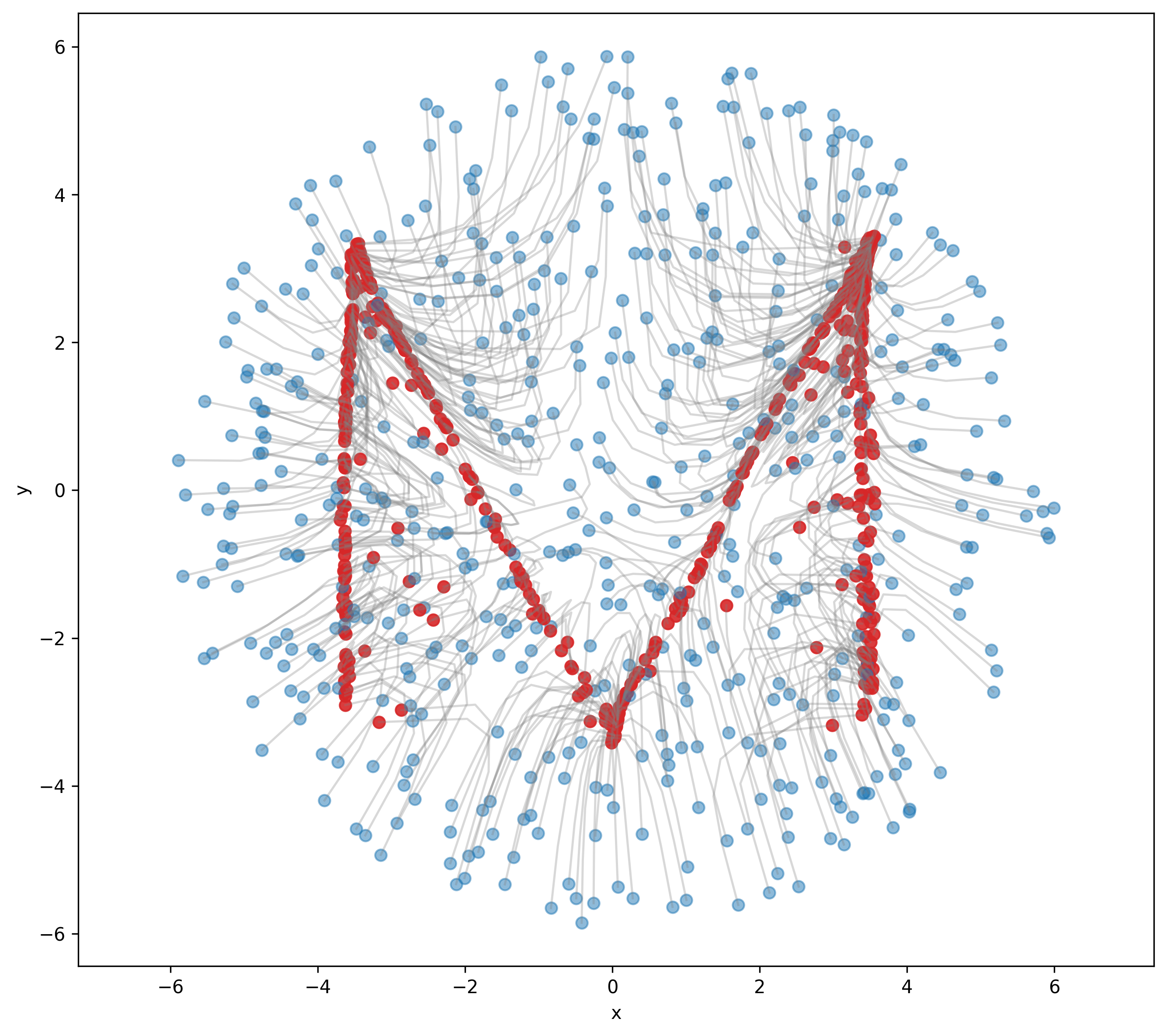}\\[-2pt]
      {\scriptsize TFM (steps=5)}
    \end{minipage}
  \end{minipage}

  \vspace{-1em}

  \caption{\emph{2D Generation Trajectory Visualization on Synthetic Data.} Tested methods include: Flow Matching\cite{lipman2023flow}, Rectified Flow\cite{liu2023flow}, Flow Map Matching\cite{boffi2024flow}, MeanFlow\cite{geng2025mean}.}
  \label{fig:image-viz}
  \vspace{-1em}
\end{figure*}

\paragraph{Loss Metrics.}
In Eq.~\eqref{eq:STFM}, the Bregman divergence $D(\cdot,\cdot)$ is instantiated as the squared $\ell_2$ loss. Following~\cite{geng2025mean}, we further investigate alternative loss metrics. In general, we consider loss functions of the form
$
\mathcal{L} = \|\Delta\|_2^{2\gamma}
$,
where $\Delta$ denotes the regression error. It can be shown (see~\cite{geng2025consistency}) that minimizing $\|\Delta\|_2^{2\gamma}$ is equivalent to minimizing the squared $\ell_2$ loss $\|\Delta\|_2^2$ with \emph{adapted loss weights}. 
% The detailed derivation is provided in the appendix. 
In practice, we define the weight as
$
w = \frac{1}{(\|\Delta\|_2^2 + c)^p}
$,
where $p = 1 - \gamma$ and $c > 0$ is a small constant (e.g., $10^{-3}$) for numerical stability. The resulting adaptively weighted loss takes the form $\mathrm{sg}(w)\cdot \mathcal{L}$, with $\mathcal{L} = \|\Delta\|_2^2$. When $p = 0.5$, this formulation resembles the Pseudo-Huber loss~\cite{song2024improved}. We compare different choices of $p$ in the experiments.

\paragraph{Sampling Time Steps $(t,r)$.}
We sample the two time steps $(t,r)$ from a predefined logit-normal (lognorm) distribution~\cite{esser2024scalingrectifiedflowtransformers, geng2025mean}. Specifically, we first draw a sample from a normal distribution $\mathcal{N}(\mu,\sigma)$ and map it to the interval $(0,1)$ via the logistic function to obtain $t$. We then sample another logit-normal variable $d$ in the same manner and set
$
r = t + d\,(1 - t)
$,
which ensures that the constraint $0 \leq t \leq r \leq 1$ is satisfied. Note that for any given $t$, the transition time $r$ is obtained by an affine transformation of a logit-normal random variable, ensuring that $r \in [t,1]$ while preserving a logit-normal–shaped density over the valid interval. Different hyperparameter settings of the logit-normal distribution are evaluated in the experiments.

\paragraph{Conditioning on $(t,r)$.}
We employ positional embeddings~\cite{geng2025mean, vaswani2023attentionneed} to encode the time variables, which are subsequently combined and used as conditioning inputs to the neural network. Although the vector field is parameterized as $X_\theta(x_t, t, r)$, it is not strictly necessary for the network to directly condition on $(t,r)$. For instance, the network can instead condition on $(t, \Delta t)$, where $\Delta t = r - t$. In this case, we define
$
X_\theta(\cdot, t, r) \triangleq \mathrm{net}(\cdot, t, r - t)
$,
with $\mathrm{net}$ denoting the neural network. The Jacobian-vector product (JVP) is always computed with respect to the function $X_\theta(\cdot, t, r)$. We empirically compare different conditioning strategies in the experiments.

\section{Experiments} \label{sec:experiment}
% In order to .

% In our ablation study, we use the EDM architecture as developed in [34], trained for 80 epochs (400K iterations) in the CIFAR-10 dataset for fast training and evaluation. 

\subsection{Synthetic Data and Visualization}
Synthetic data experiments are conducted to visualize the results and intuitively demonstrate the effectiveness of the proposed method~\cite{zhang2025hierarchical, ma2025learningstraightflowsvariational}. We simulate and visualize the generation trajectories of different methods on a 2D alphabet “M” dataset, as shown in Figure~\ref{fig:image-viz}. In this dataset, the source distribution (blue points) is circular, while the target distribution (red points) forms the shape of the letter “M”. The visualization results are consistent with the discussion in the previous section: our goal is to enable generation with arbitrary step sizes and an arbitrary number of steps by modeling the generation trajectory. Notably, even one-step generation shows promising results.
% Furthermore, we conduct experiments on a more complex synthetic dataset, as illustrated in Figure~\ref {fig:viz_2D_moon}. In this case, the source distribution (blue points) consists of eight Gaussian clusters surrounding the target distribution (red points), which forms an upper–lower moon shape. These results further demonstrate the robustness and generalization capability of our method in modeling complex nonlinear transformations.

\subsection{Visual Generation}

\paragraph{CIFAR-10.}
CIFAR-10 is a $32 \times 32$ resolution image dataset containing multiple classes and is a widely used benchmark in generative modeling~\cite{krizhevsky2009learning}. 
For a fair evaluation, we adopt the same UNet architecture and training protocol as in prior work~\cite{guo2025variational, geng2025consistency}, while replacing the conventional flow matching objective with the proposed \emph{Transition Flow Matching} objective as defined in Eq.~\eqref{eq:STFM}.

The UNet model $X^\theta$ follows a standard encoder--decoder design with residual blocks and skip connections. 
A self-attention block is inserted after the residual block at $16\times16$ resolution and at the bottleneck layer. 
The model takes the current state $x_t$ and the time variables $(t,r)$ as input, where $(t,r)$ are embedded and used to modulate adaptive group normalization layers through learnable scale and shift parameters.

\begin{table*}[t]
\centering
\scriptsize
\setlength{\tabcolsep}{4pt}
\renewcommand{\arraystretch}{1}

% adjustbox 在 table* 内部
\begin{adjustbox}{width=0.9\textwidth,center}
\begin{tabular}{c l c c c c c}
\Xhline{4\arrayrulewidth}
 & \textbf{NFE / Sampler} & \textbf{\# Params.} & \textbf{1} & \textbf{2} & \textbf{5} & \textbf{10} \\
\hline\hline
% ... (表格内容) ...
\multirow{2}{*}{\rotatebox[origin=c]{90}{\tiny \makecell[c]{Flow}}}
& {Flow Matching~\cite{lipman2023flow} \textcolor{lightgray}{\scriptsize{[ICLR'23]}}} & 36.5M & - & 166.65 & 36.19 & 14.4 \\
& {VFM~\cite{guo2025variational} \textcolor{lightgray}{\scriptsize{[ICML'25]}}} & 60.6M & - & 97.83 & 13.12 & 5.34 \\
\hline
% ... (表格内容) ...
\multirow{3}{*}{\rotatebox{90}{\tiny \makecell[c]{Re-Flow}}} 
& {1-Rectified Flow~\cite{liu2023flow} \textcolor{lightgray}{\scriptsize{[ICLR'23]}}} & 36.5M & 378 & 6.18 & - & - \\
& {2-Rectified Flow~\cite{liu2023flow} \textcolor{lightgray}{\scriptsize{[ICLR'23]}}} & 36.5M & 12.21 & 4.85 & - & - \\
& {3-Rectified Flow~\cite{liu2023flow} \textcolor{lightgray}{\scriptsize{[ICLR'23]}}} & 36.5M & 8.15 & 5.21 & - & - \\
\hline
% ... (表格内容) ...
\multirow{7}{*}{\rotatebox{90}{\tiny \makecell[c]{Mean Velocity/\\Consistency}}}
& {CT~\cite{song2023consistency} \textcolor{lightgray}{\scriptsize{[ICML'23]}}} & 61.8M & 8.71 & 5.32 & 11.412 & 23.948 \\
& {iCT~\cite{song2024improved} \textcolor{lightgray}{\scriptsize{[ICLR'24]}}} & ~55M & 2.83 & 2.46 & - & - \\
& {ECT~\cite{geng2025consistency} \textcolor{lightgray}{\scriptsize{[ICLR'25]}}} & ~55M & 3.60 & 2.11 & - & - \\
& {sCT~\cite{lu2025simplifying} \textcolor{lightgray}{\scriptsize{[ICLR'25]}}} & ~55M & 2.85 & 2.06 & - & - \\
& {IMM~\cite{zhou2025inductive} \textcolor{lightgray}{\scriptsize{[ICML'25]}}} & ~55M & 3.20 & 1.98 & - & - \\
& {MeanFlow~\cite{geng2025mean} \textcolor{lightgray}{\scriptsize{[NeurIPS'25]}}} & ~55M & 2.92 & 2.23 & 2.84 & 2.27 \\
& {S-VFM~\cite{ma2025learningstraightflowsvariational} \textcolor{lightgray}{\scriptsize{[CVPR'26]}}} & 60.6M & 2.81 & 2.16 & 2.02 & 1.97 \\
\hline
% ... (表格内容) ...
\rowcolor{cvprblue!15} 
& {TFM \textcolor{lightgray}{\scriptsize{[Ours]}}} & 55M & 2.77 & 2.08 & 1.96 & 1.91 \\
\Xhline{4\arrayrulewidth}
\end{tabular}
\end{adjustbox} % <-- adjustbox 在 table* 内部结束

\vspace{0em}
\caption{
\emph{Quantitative Comparison with Different Generation Methods on CIFAR-10 Dataset.} 
Our method achieves the best performance in one-step generation (NFE $=1$). 
Moreover, the FID score consistently decreases as NFE increases.
}
\label{tab:cifar10_fid}
\vspace{-2em}
\end{table*}

To quantitatively evaluate the generation performance, we compare our method with several state-of-the-art approaches by measuring the generation quality using the Fréchet Inception Distance (FID)~\cite{heusel2017gans}, computed under varying NFE: $[1, 2, 5, 10]$ and adaptive-step Dopri5 ODE solver~\cite{dormand1980family}, as presented in Table~\ref{tab:cifar10_fid}. 
The results in Table~\ref{tab:cifar10_fid} show that our method achieves the best performance in one-step generation (NFE $=1$). 
Moreover, the FID score consistently decreases as NFE increases, while both Consistency Models and Mean Velocity Models tend to exhibit degraded performance with higher NFE values.

% During training, we sample $(X_0,X_1)$ pairs from the data distribution and construct linear interpolants following the standard Flow Matching setting. 
% Given a randomly sampled time pair $(t,r)$ with $0\le t\le r\le1$, the network is trained to regress the transition flow using the tractable Transition Flow Matching objective described in Eq.~\eqref{eq:STFM}, which enforces consistency with the Transition Flow Identity.

% Figure~\ref{} illustrates the generation results under different numbers of function evaluations (NFE), using two distinct initial noise sets $X_0^1$ and $X_0^2$. 
% Within each row, images at the same spatial location across different panels are generated from the same initial noise realization. 
% Each column corresponds to a different NFE setting, with NFE values of $[1,2,5,10]$ from left to right. 
% As the number of function evaluations increases, the generated images exhibit improved visual fidelity and structural coherence. 
% Notably, even single-step generation (NFE $=1$) produces competitive samples, supporting the claim that the transition flow model $X^\theta(X_t,t,r)$ learns the transition to the state at $r$ from state $x_t$ at $t$, enabling effective few-step sampling.

\paragraph{ImageNet.}
To evaluate robustness and scalability on large-scale data, we conduct experiments on the ImageNet dataset with image resolution $256\times256$~\cite{krizhevsky2012imagenet}. 
All experiments are performed on class-conditional ImageNet generation at this resolution. 
Following common practice, we evaluate the Fr\'echet Inception Distance (FID)~\cite{heusel2017gans} on 50K randomly generated images.
\begin{figure*}[t]
\centering

% Local box settings (only affect this figure)
{
\setlength{\fboxrule}{1.1pt}  % frame line width
\setlength{\fboxsep}{1.0pt}   % padding between frame and content

% A small helper width for the left text label (tune if needed)
\newcommand{\labw}{0.08\textwidth}

% ---------- Row 1 ----------
\makebox[\textwidth][c]{%
\fcolorbox{blue!80!black}{white}{%
\makebox[\labw][l]{\color{blue!80!black}\scriptsize\textbf{NFE=1}}\hspace{3.0mm}%
\includegraphics[width=0.108\textwidth]{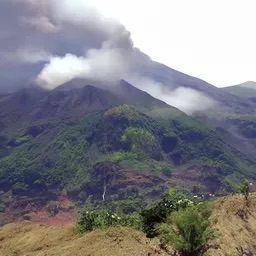}\hspace{0.25mm}%
\includegraphics[width=0.108\textwidth]{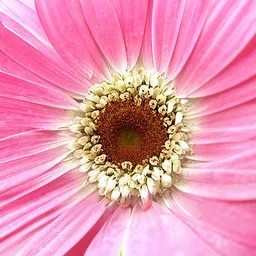}\hspace{0.25mm}%
\includegraphics[width=0.108\textwidth]{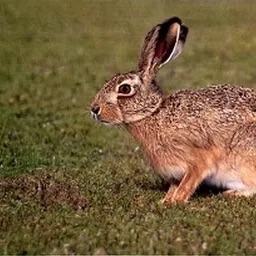}\hspace{0.25mm}%
\includegraphics[width=0.108\textwidth]{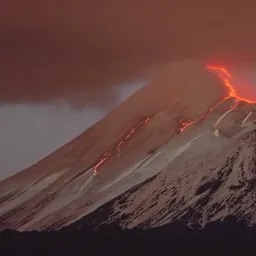}\hspace{0.25mm}%
\includegraphics[width=0.108\textwidth]{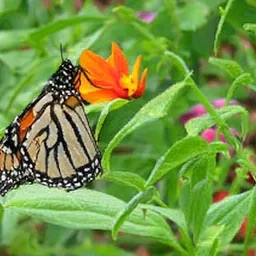}\hspace{0.25mm}%
\includegraphics[width=0.108\textwidth]{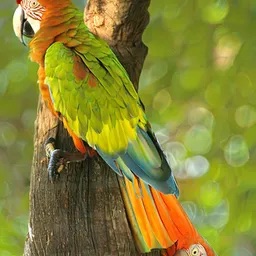}\hspace{0.25mm}%
\includegraphics[width=0.108\textwidth]{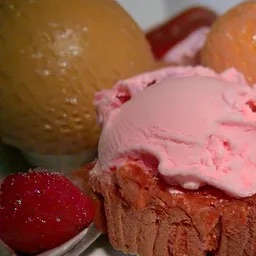}\hspace{0.25mm}%
\includegraphics[width=0.108\textwidth]{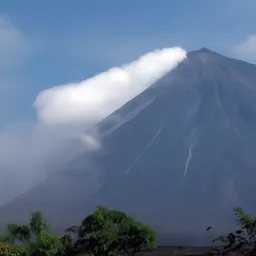}%
}}%

\vspace{-0.3mm}

% ---------- Row 2 ----------
\makebox[\textwidth][c]{%
\fcolorbox{blue!65!red}{white}{%
\makebox[\labw][l]{\color{blue!65!red}\scriptsize\textbf{NFE=2}}\hspace{3.0mm}%
\includegraphics[width=0.108\textwidth]{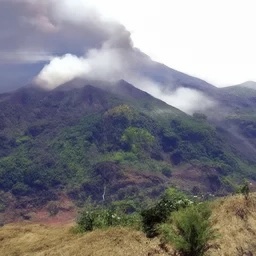}\hspace{0.25mm}%
\includegraphics[width=0.108\textwidth]{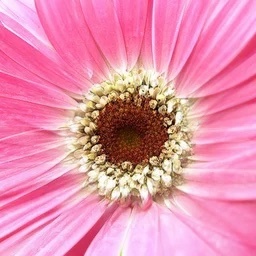}\hspace{0.25mm}%
\includegraphics[width=0.108\textwidth]{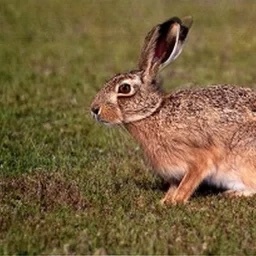}\hspace{0.25mm}%
\includegraphics[width=0.108\textwidth]{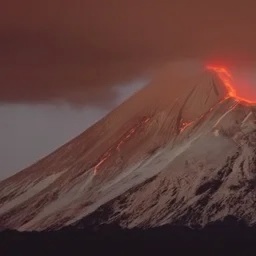}\hspace{0.25mm}%
\includegraphics[width=0.108\textwidth]{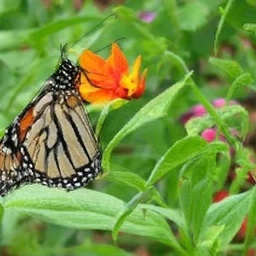}\hspace{0.25mm}%
\includegraphics[width=0.108\textwidth]{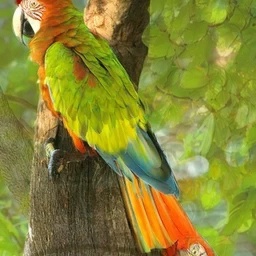}\hspace{0.25mm}%
\includegraphics[width=0.108\textwidth]{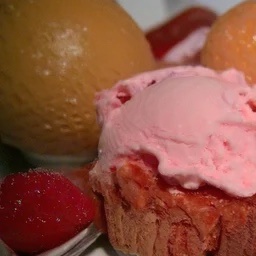}\hspace{0.25mm}%
\includegraphics[width=0.108\textwidth]{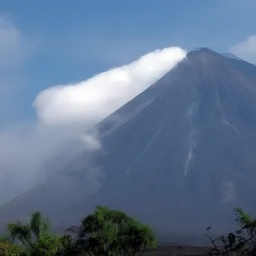}%
}}%

\vspace{-0.3mm}

% ---------- Row 3 ----------
\makebox[\textwidth][c]{%
\fcolorbox{red!65!blue}{white}{%
\makebox[\labw][l]{\color{red!65!blue}\scriptsize\textbf{NFE=5}}\hspace{3.0mm}%
\includegraphics[width=0.108\textwidth]{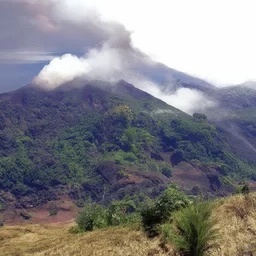}\hspace{0.25mm}%
\includegraphics[width=0.108\textwidth]{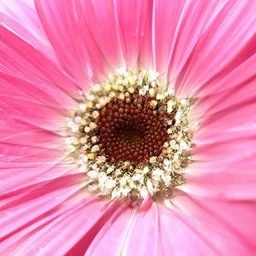}\hspace{0.25mm}%
\includegraphics[width=0.108\textwidth]{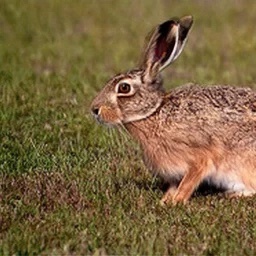}\hspace{0.25mm}%
\includegraphics[width=0.108\textwidth]{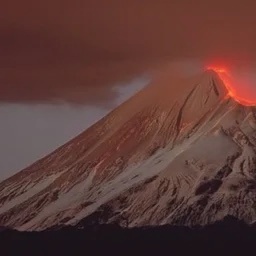}\hspace{0.25mm}%
\includegraphics[width=0.108\textwidth]{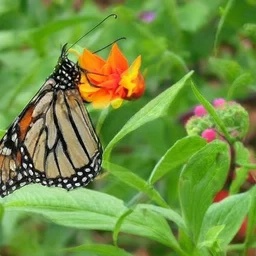}\hspace{0.25mm}%
\includegraphics[width=0.108\textwidth]{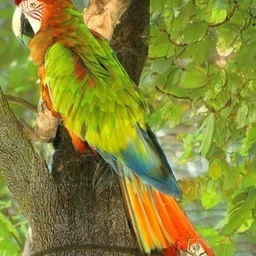}\hspace{0.25mm}%
\includegraphics[width=0.108\textwidth]{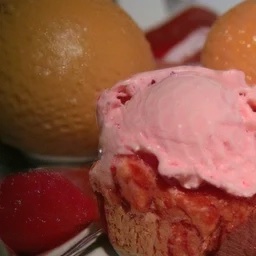}\hspace{0.25mm}%
\includegraphics[width=0.108\textwidth]{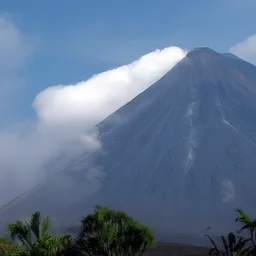}%
}}%

\vspace{-0.3mm}

% ---------- Row 4 ----------
\makebox[\textwidth][c]{%
\fcolorbox{red!85!black}{white}{%
\makebox[\labw][l]{\color{red!85!black}\scriptsize\textbf{NFE=10}}\hspace{3.0mm}%
\includegraphics[width=0.108\textwidth]{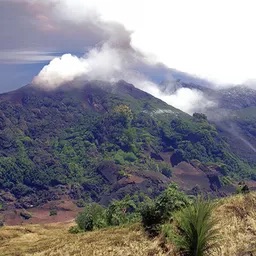}\hspace{0.25mm}%
\includegraphics[width=0.108\textwidth]{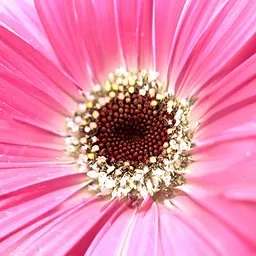}\hspace{0.25mm}%
\includegraphics[width=0.108\textwidth]{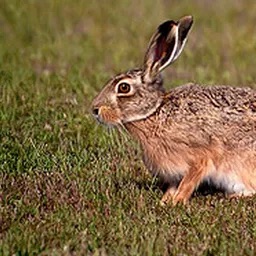}\hspace{0.25mm}%
\includegraphics[width=0.108\textwidth]{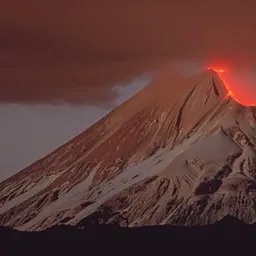}\hspace{0.25mm}%
\includegraphics[width=0.108\textwidth]{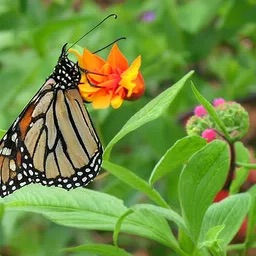}\hspace{0.25mm}%
\includegraphics[width=0.108\textwidth]{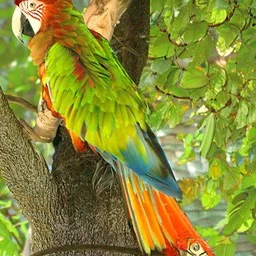}\hspace{0.25mm}%
\includegraphics[width=0.108\textwidth]{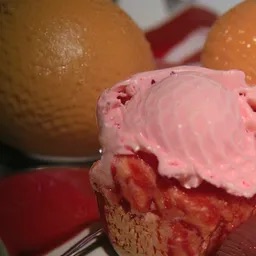}\hspace{0.25mm}%
\includegraphics[width=0.108\textwidth]{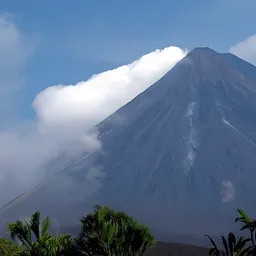}%
}}%

} % end local settings group

\vspace{-1em}

\caption{\emph{Generation Results on ImageNet-256 under Varying NFE.} As the number of function evaluations (NFE) increases from 1 to 10, the generated images exhibit progressively improved detail and fidelity. Notably, even single-step generation already produces reasonably good results.}
\label{fig:imagenet}
\vspace{-2em}
\end{figure*}
Following prior works~\cite{frans2025one, ma2025learningstraightflowsvariational, guo2025variational}, we implement all models in the latent space of a pre-trained VAE tokenizer~\cite{rombach2022highresolutionimagesynthesislatent}. 
For $256\times256$ images, the tokenizer maps images into a latent representation of size $32\times32\times4$, which serves as the input to the generative model. 
All models are trained from scratch under identical data and optimization settings. 
% Additional implementation details are provided in Appendix~\S\ref{}.

As the backbone architecture, we adopt MeanFlow~\cite{geng2025mean}, a transformer-based model that has demonstrated strong performance in high-resolution image generation. 
For fair comparison, we strictly follow the original MeanFlow~\cite{geng2025mean} training recipe and optimization settings, modifying only the learning objective.

Specifically, we introduce \textbf{TFM}, which is parameterized by $X^\theta(X_t,t,r)$, by replacing the original velocity-based loss with the proposed Transition Flow Matching loss in Eq.~\eqref{eq:STFM}. 
The model is trained to directly predict the transition flow $X(x_t,t,r)$ rather than the local velocity field. 
Both time variables $t$ and $r$ are embedded and injected into the transformer blocks via adaptive normalization layers.

During training, we sample $(X_0,X_1)$ pairs from the data distribution and construct linear interpolants following the standard Flow Matching setting. 
Given a randomly sampled time pair $(t,r)$ with $0\le t\le r\le1$, the network is trained to regress the transition flow using the tractable Transition Flow Matching objective described in Eq.~\eqref{eq:STFM}, which enforces consistency with the Transition Flow Identity.
At inference time, sample generation is performed by repeatedly applying the learned transition flow across a predefined time grid, starting from Gaussian noise. 
This formulation allows the model to directly predict future states along the transition trajectory, enabling flexible and efficient generation.

Figure~\ref{fig:imagenet} illustrates generation results under different numbers of function evaluations (NFE), using distinct initial noise. 
Within each row, images at the same spatial location across different panels are generated from the same initial noise realization. 
Each column corresponds to a different NFE setting, with NFE values of $[1,2,5,10]$ arranged from top to bottom.
As the number of function evaluations increases, the generated images exhibit improved visual fidelity and structural coherence. 
Notably, even one-step generation (NFE $=1$) produces competitive samples, supporting the claim that the transition flow model $X^\theta(X_t,t,r)$ learns to predict the future state at $r$ from the current state $X_t$ at time $t$, thereby enabling effective few-step sampling.

\begin{table}[t]
\centering
\scriptsize
\setlength{\tabcolsep}{5pt}
\renewcommand{\arraystretch}{1}

\begin{adjustbox}{width=0.7\linewidth,center}
\begin{tabular}{lcccc}
\Xhline{4\arrayrulewidth} % <-- 替换 \toprule
\textbf{Method} & \textbf{\# Params.} & \textbf{NFE} & \textbf{FID} \\
\hline\hline % <-- 替换 \midrule
iCT-XL/2~\cite{song2024improved} \textcolor{lightgray}{\scriptsize{[ICLR'24]}} & 675M & 1 & 34.24 \\
Shortcut-XL/2~\cite{frans2025one} \textcolor{lightgray}{\scriptsize{[ICLR'25]}} & 675M & 1 & 10.60 \\
MeanFlow-XL/2~\cite{geng2025mean} \textcolor{lightgray}{\scriptsize{[NeurIPS'25]}} & 676M & 1 & 3.43 \\
S-VFM-XL/2~\cite{ma2025learningstraightflowsvariational} \textcolor{lightgray}{\scriptsize{[CVPR'26]}} & 677M & 1 & 3.31 \\
\rowcolor{cvprblue!15}
TFM-XL/2 \textcolor{lightgray}{\scriptsize{[Ours]}} & 676M & 1 & 3.02 \\
\hline % <-- 替换 \midrule
iCT-XL/2~\cite{song2024improved} \textcolor{lightgray}{\scriptsize{[ICLR'24]}} & 675M & 2 & 20.30 \\
iMM-XL/2~\cite{zhou2025inductive} \textcolor{lightgray}{\scriptsize{[ICML'25]}} & 675M & $1\times2$ & 7.77 \\
MeanFlow-XL/2~\cite{geng2025mean} \textcolor{lightgray}{\scriptsize{[NeurIPS'25]}} & 676M & 2 & 2.93 \\
S-VFM-XL/2~\cite{ma2025learningstraightflowsvariational} \textcolor{lightgray}{\scriptsize{[CVPR'26]}} & 677M & 2 & 2.86 \\
\rowcolor{cvprblue!15}
TFM-XL/2 \textcolor{lightgray}{\scriptsize{[Ours]}} & 676M & 2 & 2.77 \\
\Xhline{4\arrayrulewidth} % <-- 替换 \bottomrule
\end{tabular}
\end{adjustbox} % 2. 关闭 adjustbox

\vspace{0em}
\caption{\emph{Quantitative Comparison with Different Generation Methods on ImageNet $256 \times 256$ Dataset.} 
Our method achieves the best performance in few-step generation.}
\label{tab:image256}
\vspace{-3em}
\end{table}

Following standard evaluation protocols, we randomly generate 50K images from each model and report the corresponding FID scores in Table~\ref{tab:image256}. 
TFM-XL consistently outperforms both Consistency Models and Mean Velocity Models under identical training settings. 
These results demonstrate that explicitly modeling \emph{global transition dynamics} through a transition flow—capable of predicting arbitrary future states—rather than learning a naturally local velocity field, provides greater flexibility in simulation steps for flow- and diffusion-based generation methods.

We further analyze the training dynamics by comparing TFM with SiT and MeanFlow across different training iterations, as shown in Figure~\ref{fig:train}. 
For SiT, we follow the default inference setting with $\text{NFE}=250$, while for both TFM and MeanFlow we use $\text{NFE}=1$ to reflect their one-step generation capability. 
The results show that TFM achieves clear performance improvements once sufficiently trained, and the training curves indicate that performance consistently improves as the number of training epochs increases. 
This behavior confirms the effectiveness and scalability of the proposed transition flow formulation in both training and inference.

\subsection{Ablation Study}

In our ablation study, we use the ViT-B/4 architecture~\cite{dosovitskiy2021imageworth16x16words} (“Base" size with a patch size of 4 as developed in ~\cite{ma2024sit}, trained for 80 epochs (400K iterations).

\paragraph{Conditioning on $(t,r)$.}
Following the model parameterization in Sec.~\ref{sec:implementation}, the transition flow $X_\theta(x_t,t,r)$ requires explicit conditioning on the temporal variables. Similar to prior designs, we encode time information through positional embeddings and study different conditioning strategies that share the same functional form but differ in the specific choice of variables. Concretely, instead of directly conditioning on $(t,r)$, the network can equivalently condition on $(t,\Delta t)$ with $\Delta t = r-t$, leading to the parameterization
$
X_\theta(\cdot,t,r)\triangleq \mathrm{net}(\cdot,t,r-t).
$
We compare these variants in Tab.~\ref{tab:pos}. The results indicate that all studied conditioning forms lead to stable and effective one-step generation, demonstrating that our transition flow formulation is robust to the exact choice of temporal embedding. Conditioning on $(t,\Delta t)$ yields the strongest performance overall, while directly using $(t,r)$ performs comparably. Notably, even conditioning solely on the interval $\Delta t$ produces competitive results, suggesting that relative temporal information plays a dominant role in our method.

\begin{table*}[t]
\centering

\begin{subtable}{0.3\linewidth}
\centering
\begin{tabular}{cc}
\hline
pos. embed & FID, 1-NFE \\
\hline
$(t,r)$ & 60.12 \\
$(t,t-r)$ & \textbf{59.87} \\
$(t,r,t-r)$ & 62.48 \\
$t-r$ only & 62.16 \\
\hline
\end{tabular}
\caption{\emph{Positional embedding.} The network is conditioned on the embeddings applied to the specified variables.}
\label{tab:pos}
\end{subtable}
\hfill
\begin{subtable}{0.3\linewidth}
\centering
\begin{tabular}{cc}
\hline
$p$ & FID, 1-NFE \\
\hline
0.0 & 79.76 \\
0.5 & 62.47 \\
1.0 & \textbf{59.87} \\
1.5 & 65.68 \\
2.0 & 69.26 \\
\hline
\end{tabular}
\caption{\emph{Loss metrics.} $p=0$ is squared L2 loss. $p=0.5$ is Pseudo-Huber loss.}
\label{tab:loss}
\end{subtable}
\hfill
\begin{subtable}{0.3\linewidth}
\centering
\begin{tabular}{cc}
\hline
$\omega$ & FID, 1-NFE \\
\hline
1.0 (w/o cfg) & 61.06 \\
1.5 & 32.51 \\
2.0 & 19.05 \\
3.0 & \textbf{14.76} \\
5.0 & 20.12 \\
\hline
\end{tabular}
\caption{\emph{CFG scale.} Our method supports 1-NFE CFG sampling.}
\label{tab:cfg}
\end{subtable}

\vspace{-1.0em}

% \begin{subtable}{0.3\linewidth}
% \centering
% \begin{tabular}{cc}
% \hline
% \% of $r \ne t$ & FID, 1-NFE \\
% \hline
% 0\% (=FM) & 328.91 \\
% 25\% & \textbf{61.06} \\
% 50\% & 63.14 \\
% 100\% & 67.32 \\
% \hline
% \end{tabular}
% \caption{Ratio of sampling $r \ne t$. The $0\%$ entry reduces to the standard Flow Matching baseline.}
% \end{subtable}
% \hfill
% %
% \begin{subtable}{0.3\linewidth}
% \centering
% \begin{tabular}{cc}
% \hline
% jvp tangent & FID, 1-NFE \\
% \hline
% $(v,0,1)$ & \textbf{61.06} \\
% $(v,0,0)$ & 268.06 \\
% $(v,1,0)$ & 329.22 \\
% $(v,1,1)$ & 137.96 \\
% \hline
% \end{tabular}
% \caption{JVP computation. The correct jvp tangent is $(v,0,1)$ for Jacobian $(\partial_z u, \partial_r u, \partial_t u)$.}
% \end{subtable}
% \hfill
%
\begin{subtable}{0.4\linewidth}
\centering
\begin{tabular}{cc}
\hline
$t,r$ sampler & FID, 1-NFE \\
\hline
uniform$(0,1)$ & 65.73 \\
lognorm$(-0.2,1.0)$ & 63.56 \\
lognorm$(-0.2,1.2)$ & 62.27 \\
lognorm$(-0.4,1.0)$ & \textbf{59.87} \\
lognorm$(-0.4,1.2)$ & 59.94 \\
\hline
\end{tabular}
\caption{\emph{Time samplers.} $t$ and $r$ are sampled from the specific sampler.}
\label{tab:time}
\end{subtable}
\hfill
\begin{subfigure}{0.55\linewidth}
\centering
\includegraphics[width=\linewidth]{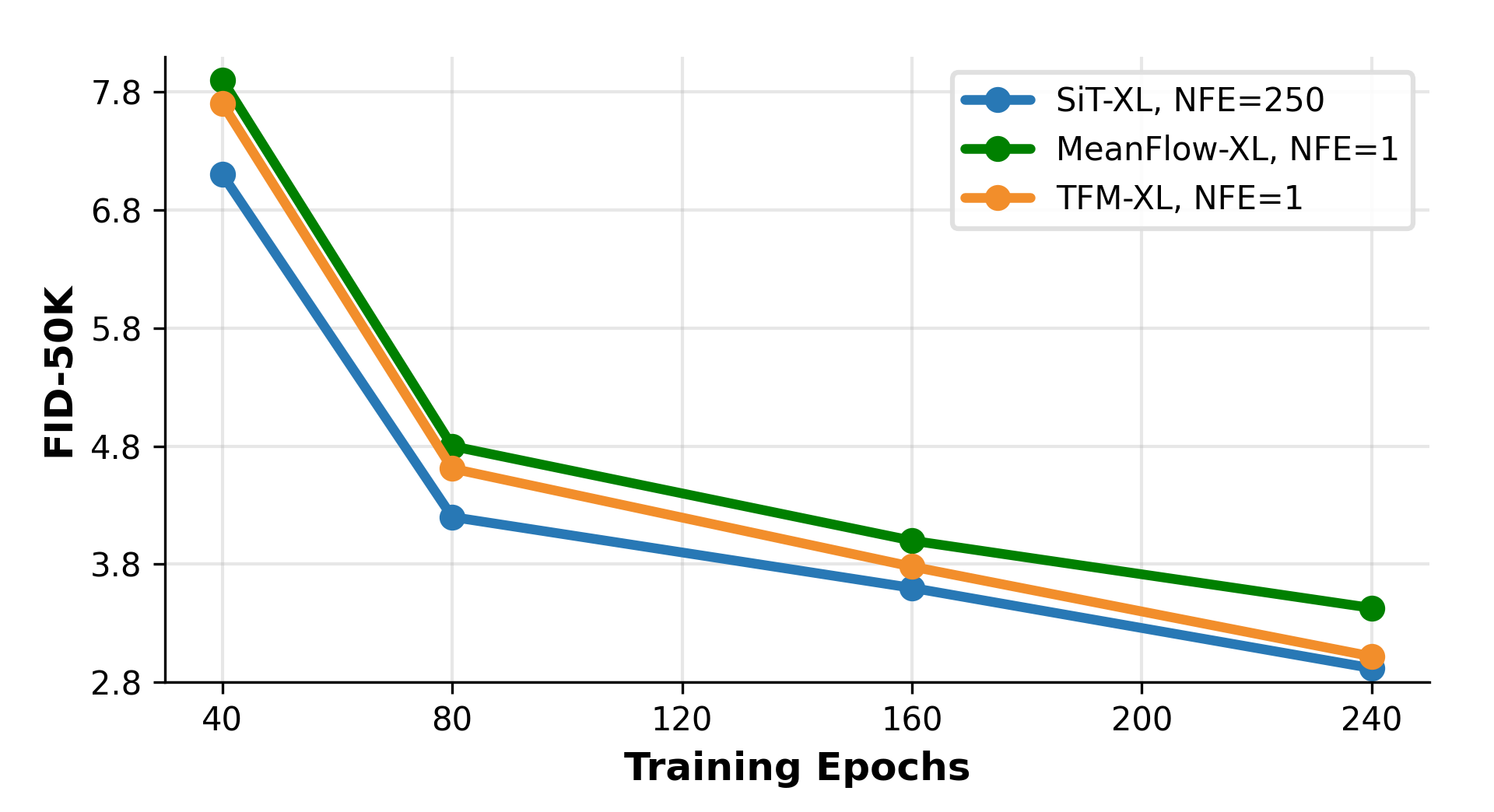}
\caption{\emph{Comparison of FID-50K Score over Training Iterations on ImageNet $256 \times 256$ Dataset.}}
\label{fig:train}
\end{subfigure}

\vspace{-1.0em}

\caption{\emph{Ablation study on 1-NFE ImageNet $256\times256$ generation.} FID-50K is evaluated. Default configurations are: B/4 backbone, 80-epoch training from scratch.}

\vspace{-2.0em}

\end{table*}

\paragraph{Sampling Time Steps $(t,r)$.}
The choice of the sampling distribution for time steps is known to have a significant impact on generation quality. In our framework, we sample $(t,r)$ using logit-normal distributions, consistent with the implementation described in Sec.~\ref{sec:implementation}. Specifically, we consider two logit-normal distributions: one for sampling the base time $t$ and another for sampling the relative offset that determines $r$. This design ensures $0\leq t\leq r\leq 1$ by construction while allowing flexible control over the density of sampled time pairs. We evaluate different hyperparameter settings of these logit-normal samplers in Tab.~\ref{tab:time}. The results show that logit-normal sampling consistently outperforms alternative choices, aligning with observations reported in prior flow-matching-based methods.

\paragraph{Loss Metrics.}
As discussed in Sec.~\ref{sec:implementation}, our training objective adopts a Bregman divergence instantiated via adaptively weighted squared $\ell_2$ loss. While the overall loss formulation remains the same across experiments, we vary the exponent $p$ that controls the adaptive weighting, which effectively changes the loss metric. The corresponding results are summarized in Tab.~\ref{tab:loss}. We find that $p=1$ achieves the best overall performance, indicating a strong benefit from aggressive adaptive weighting. Setting $p=0.5$, which resembles the Pseudo-Huber loss, also yields competitive results. In contrast, the standard squared $\ell_2$ loss ($p=0$) underperforms relative to other choices, though it still leads to meaningful one-step generation. These trends are consistent with prior findings on the importance of loss reweighting for few-step or single-step generative models.

\paragraph{Guidance Scale.}
We further investigate the effect of classifier-free guidance (CFG) within our transition flow framework. The results, reported in Tab.~\ref{tab:cfg}, show that increasing the guidance scale significantly improves generation quality. This behavior is consistent with observations in multi-step diffusion and flow models. Importantly, our CFG formulation, introduced in Sec.~\ref{sec:implementation}, is fully compatible with one-step (1-NFE) sampling and does not introduce additional inference cost beyond a constant factor.

% \paragraph{Scalability.}
% Finally, we study the scalability of our method with respect to model size and training duration. Fig.~\ref{} reports the one-step FID scores across different configurations. Similar to Transformer-based diffusion and flow models, our transition flow models exhibit clear and consistent scaling trends: larger models and longer training lead to systematically improved performance. These results suggest that the proposed method retains favorable scalability properties while enabling efficient one-step generation.

\section{Conclusion} \label{sec:conclusion}

In this work, we introduced \textbf{Transition Flow Matching}, a principled framework for few-step generative modeling. Instead of learning local velocity fields as in conventional Flow/Diffusion models, our method directly models the generation trajectory through transition dynamics, providing a global perspective on generative flows. We derive the \textbf{Transition Flow Identity} and develop a theoretically grounded objective that enables end-to-end training from scratch, while also establishing a unified view that connects our framework with Mean Velocity methods. 
% Extensive experiments demonstrate that modeling transition flows leads to competitive performance and provides an effective approach for efficient generative modeling.

\section{Proof of Transition Identity}
\label{proof:transitionIdentity}

\subsection{Notation and Preliminary}
\paragraph{Random variables and realizations.}
We work in $\mathbb{R}^d$. Uppercase letters (e.g., $X_t, Z$) denote random variables (RVs), and lowercase letters (e.g., $x_t, z$) denote their realizations (points/values). For a density (or probability law) of an RV $X_t$, we write $p_t(\cdot)$, and for a conditional density we write $p_{t\mid Z}(\cdot\mid z)$. Expectations are denoted by $\mathbb{E}[\cdot]$.

\paragraph{Source/target distributions and coupling.}
Let $X_0 \sim p_0$ be the \emph{source} distribution (e.g., standard Gaussian noise) and $X_1 \sim p_1$ be the \emph{target} distribution (e.g., images).
A generative model constructs a continuous path of distributions $\{p_t\}_{t\in[0,1]}$ that transports $p_0$ to $p_1$.
Let $(X_0,X_1)$ be any coupling on $\mathbb{R}^d$ with joint density $\pi$ whose marginals are $p_0$ and $p_1$ (not necessarily independent). In this work, we follow the standard Flow Matching setting, the \emph{source} and the \emph{target} distributions are independent: $\pi(x_0, x_1) = p_0(x_0) \, p_1(x_1)$.

\paragraph{Conditioning variable and conditional paths~\cite{lipman2023flow}.}
We use a conditioning RV $Z$ to index conditional paths.
Conditioned on $Z=z$, we obtain conditional coupling $(X_0^Z, X_1^Z)$ and a conditional path $\{X_t^Z\}_{t\in[0,1]}$ with conditional density $p_{t\mid Z}(\cdot\mid z)$.
The marginal path is $\{X_t\}_{t\in[0,1]}$, with density $p_t(\cdot)$, satisfying:
\begin{equation}
\label{proof:eq:bayes_xt}
p_t(x_t) \;=\; \int p_{t\mid Z}(x_t\mid z)\,p_Z(z) dz,
\qquad
X_t \sim p_t,\ \ X_t\mid(Z=z)\sim p_{t\mid Z}(\cdot\mid z).
\end{equation}

\paragraph{Interpolant.}
We consider a general interpolant between the \emph{marginal} endpoints $X_0$ and $X_1$,
specified by scalar functions $\alpha:[0,1]\to\mathbb{R}$ and $\beta:[0,1]\to\mathbb{R}$:
\begin{equation}
\label{proof:eq:generalInterpolant}
X_t \;=\; \alpha(t)\,X_0 + \beta(t)\,X_1,
\qquad t\in[0,1].
\end{equation}
We assume the boundary conditions $\alpha(0)=1,\ \beta(0)=0$ and $\alpha(1)=0,\ \beta(1)=1$, so that $X_{t=0}=X_0$ and $X_{t=1}=X_1$.
If $\alpha,\beta$ are differentiable, then
\begin{equation}
\label{proof:eq:generalInterpolant_dot}
\frac{d}{dt}X_t
\;=\;
\dot{\alpha}(t)\,X_0 + \dot{\beta}(t)\,X_1.
\end{equation}
% For realizations, we write
% \[
% x_t=\alpha(t)x_0+\beta(t)x_1,
% \qquad
% \dot{x}_t=\dot{\alpha}(t)x_0+\dot{\beta}(t)x_1.
% \]
Conditioned on $Z=z$, the same schedules induce the conditional path
$X_t^Z=\alpha(t)X_0^Z+\beta(t)X_1^Z$ and $\frac{d}{dt}X_t^Z=\dot{\alpha}(t)X_0^Z+\dot{\beta}(t)X_1^Z$.

\paragraph{Flow Matching vector fields~\cite{lipman2023flow}.}
Let $v(x_t,t\mid z)\in\mathbb{R}^d$ denote a \emph{conditional} velocity field that transports the conditional density $p_{t\mid Z}(\cdot\mid z)$ along time.
The corresponding \emph{marginal} velocity field is defined by conditional expectation:
\begin{equation}
\label{proof:eq:marginalV}
v(x_t,t)
\;=\;
\int v(x_t,t\mid z)\,p_{Z\mid t}(z\mid x_t)\,dz
\;=\;
\mathbb{E}\!\left[\,v(X_t,t\mid Z)\,\middle|\,X_t=x_t\,\right].
\end{equation}
A marginal trajectory, i.e., Flow Matching generation trajectory, follows the ODE:
\begin{equation}
\label{proof:eq:ode_marginal}
\frac{d x_t}{dt} = v(x_t,t), \qquad t\in[0,1],
\end{equation}
and similarly a conditional trajectory follows $\frac{d x_t^z}{dt}=v(x_t,t\mid z)$.

\paragraph{Continuity (transport) equations~\cite{lipman2023flow, lipman2024flowmatchingguidecode}.}
The evolution of densities induced by these velocity fields is characterized by the continuity (transport) equation.
For the marginal density $p_t$,
\begin{equation}
\label{proof:eq:continuity_marginal}
\partial_t p_t(x_t) + \nabla\!\cdot\!\big(p_t(x_t)\,v(x_t,t)\big)=0,
\qquad t\in[0,1].
\end{equation}
Conditioned on $Z=z$, the conditional density $p_{t\mid Z}(\cdot\mid z)$ satisfies
\begin{equation}
\label{proof:eq:continuity_conditional}
\partial_t p_{t\mid Z}(x_t\mid z) + \nabla\!\cdot\!\big(p_{t\mid Z}(x_t\mid z)\,v(x_t,t\mid z)\big)=0,
\qquad t\in[0,1].
\end{equation}
Flow Matching learns a parameterized velocity field (or equivalent dynamics) so that the induced marginal path $\{p_t\}$ solves Eq.\ \eqref{proof:eq:continuity_marginal} with boundary conditions $p_{t=0}=p_0$ and $p_{t=1}=p_1$.

\paragraph{Learning Flow Matching.}
Flow Matching introduce a velocity field model $v^\theta(X_t,t)$ to learn $v(X_t,t)$, ideally, by minimizing the marginal Flow Matching loss:
\begin{equation}
\label{proof:eq:MFM}
\mathcal{L}_\mathrm{MFM}(\theta) = \mathbb{E}_{t,\;X_t\sim p_t}\,
D\Big(v(X_t,t),\,v^\theta(X_t,t)\Big)
\end{equation}
However, since the marginal velocity $v(X_t,t)$ in Eq.\eqref{proof:eq:marginalV} is not tractable, so the marginal loss Eq.\eqref{proof:eq:MFM} above cannot be computed as is. Instead, we minimize the conditional Flow Matching loss:
\begin{equation}
\label{proof:eq:CFM}
\mathcal{L}_\mathrm{CFM}(\theta) = \mathbb{E}_{t,\;Z,\;X_t\sim p_{t\mid Z}(\cdot \mid Z)}\,
D\Big(v(X_t,t \mid Z),\,v^\theta(X_t,t)\Big)
\end{equation}
The two losses Eq.\eqref{proof:eq:MFM} and Eq.\eqref{proof:eq:CFM} are equivalent for learning purposes, since their gradients coincide:
\begin{theorem}[Gradient equivalence of Flow Matching~\cite{lipman2024flowmatchingguidecode}]\label{proof:thm:MequivC_FM}
The gradients of the marginal Flow Matching loss and the conditional Flow Matching loss coincide:
\begin{equation}
\label{proof:eq:marginalEqvconditional_FM}
\nabla_\theta \mathcal{L}_\mathrm{MFM}(\theta)
\;=\;
\nabla_\theta \mathcal{L}_\mathrm{CFM}(\theta)
\end{equation}
In particular, the minimizer of the conditional Flow Matching loss is the marginal velocity $v(x_t, t)$.
\end{theorem}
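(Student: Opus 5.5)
The plan is the standard Bregman-divergence argument of Lipman et al. We show that the gradients of the marginal loss $\mathcal{L}_\mathrm{MFM}$ and the conditional loss $\mathcal{L}_\mathrm{CFM}$ differ only by a $\theta$-independent constant, so their gradients agree.

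Write the Bregman divergence generated by a strictly convex, differentiable $\Phi$ as
\[
D(a,b)=\Phi(a)-\Phi(b)-\langle a-b,\nabla\Phi(b)\rangle .
\]
Only the second argument depends on $\theta$. Differentiating gives
\[
\nabla_b D(a,b)=-\nabla^2\Phi(b)\,(a-b),
\]
which is \emph{affine in $a$}. Hence, by the chain rule,
\[
\nabla_\theta D\big(a,v^\theta(x,t)\big)=-\big(\partial_\theta v^\theta(x,t)\big)^{\!\top}\nabla^2\Phi\big(v^\theta(x,t)\big)\big(a-v^\theta(x,t)\big).
\]
This is affine in the target $a$, with a coefficient that depends only on $(x,t,\theta)$.

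The next step is to rewrite the conditional loss as an iterated expectation. We use the joint law of $(Z,X_t)$: $Z\sim p_Z$, then $X_t\sim p_{t\mid Z}$. Equivalently, by Eq.~\eqref{proof:eq:bayes_xt} and Bayes' rule, this is $X_t\sim p_t$ followed by $Z\sim p_{Z\mid t}(\cdot\mid X_t)$. Then
\[
\nabla_\theta\mathcal{L}_\mathrm{CFM}=\mathbb{E}_{t,X_t\sim p_t}\,\mathbb{E}_{Z\mid X_t}\Big[\nabla_\theta D\big(v(X_t,t\mid Z),v^\theta(X_t,t)\big)\Big].
\]
Because the integrand is affine in $v(X_t,t\mid Z)$, the inner expectation passes inside the target. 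Eq.~\eqref{proof:eq:marginalV} then turns $\mathbb{E}[v(X_t,t\mid Z)\mid X_t]$ into $v(X_t,t)$. The result is exactly $\nabla_\theta\mathcal{L}_\mathrm{MFM}$.

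The main technical obstacle is justifying the interchange of $\nabla_\theta$ with the expectations. For this I would assume $v^\theta$ is $C^1$ in $\theta$ and that the relevant quantities are integrable, namely $\mathbb{E}\|v(X_t,t\mid Z)\|^2<\infty$ and local boundedness of $\partial_\theta v^\theta$ and $\nabla^2\Phi$. Under these assumptions dominated convergence applies. I would also need $p_t>0$ where evaluated, so that $p_{Z\mid t}$ is well defined.

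For the final claim, set $\nabla_\theta\mathcal{L}_\mathrm{MFM}=0$. Alternatively, use the Bregman identity
\[
\mathbb{E}[D(A,b)]=\mathbb{E}[D(A,\mathbb{E}A)]+D(\mathbb{E}A,b),
\]
which follows from the same affinity. It shows that for a sufficiently expressive model the conditional loss is minimized pointwise at $v^\theta(x_t,t)=\mathbb{E}[v(X_t,t\mid Z)\mid X_t=x_t]=v(x_t,t)$, and that this minimizer is unique by strict convexity of $\Phi$.
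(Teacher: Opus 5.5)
Your proposal is correct and follows essentially the same route as the paper, which cites Lipman et al.\ for this result and runs the identical argument (steps (a)--(f)) for its Transition Flow Matching analogue: the chain rule through the second argument, affinity of $\nabla_2 D$ in the target, the marginal--conditional relation Eq.~\eqref{proof:eq:marginalV}, and the Bayes swap of sampling orders. One wording fix: in your first sentence it is the \emph{losses}, not the gradients, that differ by a $\theta$-independent constant, which is exactly what your closing Bregman identity shows directly.
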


\begin{remark}[Standard Flow Matching] \label{proof:remark:FM}
Flow Matching sets the conditioning variable to be the endpoint pair
\[
Z \;=\; (X_0,X_1),
\]
so that conditioning on $Z=z$ fixes the endpoint pair $(X_0^Z,X_1^Z) = (X_0,X_1)$.
Choosing the linear schedules $\alpha(t)=1-t$ and $\beta(t)=t$ yields the conditional path
\[
X_t^Z \;=\; (1-t)X_0^Z + tX_1^Z \;=\; (1-t)X_0 + tX_1,\qquad 0\le t\le 1
\]
hence the time-derivative of this conditional path is constant:
\[
\frac{d}{dt}X_t^Z \;=\; X_1^Z - X_0^Z \;=\; X_1 - X_0.
\]
Therefore, for $Z=(X_0,X_1)$, the conditional velocity used as supervision in the conditional Flow Matching loss is a constant:
\[
v(X_t,t\mid Z) \;=\; X_1^Z - X_0^Z \;=\; X_1 - X_0,
\]

With this setup, the Flow Matching objective in Eq.~\eqref{proof:eq:CFM} can be written explicitly as
\[
\mathcal{L}_\mathrm{FM}(\theta)
\;=\;
\mathbb{E}_{t,\;X_0\sim p_0,\;X_1\sim p_1}\,
D\Big(X_1 - X_0,\;v^\theta(X_t,t)\Big),
\qquad
X_t=(1-t)X_0+tX_1,
\]
i.e., one samples $t\sim\mathrm{Unif}[0,1]$, draws independent endpoints $X_0\sim p_0$ and $X_1\sim p_1$,
forms the interpolated state $X_t$, and regresses the model $v^\theta(X_t,t)$ to the constant target $X_1-X_0$.
By Theorem~\ref{proof:thm:MequivC_FM}, minimizing this conditional loss yields the marginal velocity field $v(x_t,t)$ in Eq.~\eqref{proof:eq:marginalV}.
\end{remark}

\subsection{Transition Flow Identity} \label{proof:sec:TranFlowIden}
To connect with previous works~\cite{geng2025mean, geng2025improved}, we define the average velocity $u(x_t, t, r)$ as:
\begin{equation}
\label{proof:eq:averageV}
    (r-t)u(x_t, t, r) = x_{t \to r} - x_t = \int_{t}^{r} v(x_\tau, \tau) \, d\tau   \quad \quad 0\leq t \leq r \leq 1
\end{equation}
where $v(x_\tau, \tau)$ is the marginal velocity of Flow Matching in time step $\tau \in [0,1]$ for state $x_\tau$, as shown in Eq.\eqref{proof:eq:marginalV}.

In Eq.\eqref{proof:eq:averageV}, $x_t$ is the current state at time step $t$, $x_{t \to r}$ is the transition state at time step $r$, which comes from $x_t$ at time step $t$. 
\begin{equation}
x_{t \to r} = x_t + \int_{t}^{r} v(x_\tau, \tau)
\end{equation}
Note here, akin to the marginal velocity of Flow Matching $v(x_t, t)$ as Eq.\eqref{proof:eq:marginalV}, $x_{t \to r}$ is the marginal transition state:
% \begin{equation}
% x_t = \int x_t^z \; p_{Z\mid \tau}(z\mid x_t) dz = \mathbb{E}  [X_t^Z \mid X_t = x_t] 
% \end{equation}
\begin{equation}
\label{proof:eq:marginalstate}
x_{t \to r} = \int x_{t \to r}^z \; p_{Z\mid t}(z\mid x_t) dz = \mathbb{E}  [X_{t \to r}^Z \mid X_t = x_t] 
\end{equation}
where $x_{t \to r}^z$ is the conditional transition state: 
\begin{equation}
x_{t \to r}^z = x_t + \int_{t}^{r} v(x_\tau, \tau \mid z) \, d\tau
\end{equation}

The average velocity $u(x_t, t, r)$ is the displacement between two time steps $t$ and $r$ divided by the time interval $r-t$:
\begin{equation}
u(x_t, t, r) \triangleq \frac{1}{r-t} \int_{t}^{r} v(x_\tau, \tau) \, d\tau
\end{equation}

Differentiate both sides of Eq.\eqref{proof:eq:averageV} with respect to t, treating r as independent of t. We have:
\begin{equation}
\label{proof:eq:u_identity}
\frac{d}{dt} (r-t)u(x_t, t, r) = \frac{d}{dt} \int_{t}^{r} v(x_\tau, \tau) \, d\tau \implies u(x_t, t, r) = v(x_t, t) + (r-t) \frac{d}{dt} u(x_t, t, r)
\end{equation}

Our learning target is the Transition Flow $X(x_t, t, r)$ that 
transit state $x_t$ at time step $t$ to $x_{t \to r}$ at time step $r$: $X(x_t, t, r) = x_{t \to r}$, which yield:
\begin{equation}
\label{proof:eq:marginalTransitionFlow}
X(x_t, t, r) = \int X(x_t, t, r \mid z) \; p_{Z\mid t}(z\mid x_t) dz = \mathbb{E}  [X(X_t, t, r \mid Z) \mid X_t = x_t] 
\end{equation}
where $X(x_t, t, r \mid z) = x_{t \to r}^z$ is conditional transition state.
Note that Eq.\eqref{proof:eq:marginalTransitionFlow} is identical to Eq.\eqref{proof:eq:marginalstate}. 

From Eq.\eqref{proof:eq:averageV}, we build the association between the average velocity and the Transition Flow (transition state):
\begin{equation}
\label{proof:eq:u2x}
u(x_t, t, r) = \frac{x_{t \to r} - x_t}{r-t} = \frac{X(x_t, t, r) - x_t}{r-t}
\end{equation}

Substitute Eq.\eqref{proof:eq:u2x} into Eq.\eqref{proof:eq:u_identity}, we obtain the following equation:
\begin{equation}
\label{proof:eq:originalX}
\frac{X(x_t,t,r)-x_t}{r-t}
=
v(x_t,t)
+
(r-t)\,\frac{d}{d t}
\left(
\frac{X(x_t,t,r)-x_t}{r-t}
\right)
\end{equation}
where \(r\) is independent of \(t\), and the time derivative of \(x_t\) is marginal velocity $v(x_t,t)$, given by Eq.\eqref{proof:eq:ode_marginal}.

Define the time derivative of the fraction:
\[
A(t) := \frac{X(x_t,t,r)-x_t}{r-t}.
\]
Using the quotient rule and the fact that \(r\) is constant, we obtain
\[
\frac{d A}{d t}
=
\frac{(r-t)\,\frac{d}{dt}\!\left(X(x_t,t,r)-x_t\right) + \left(X(x_t,t,r)-x_t\right)}{(r-t)^2}.
\]
Since
\[
\frac{d}{dt}\!\left(X(x_t,t,r)-x_t\right)
=
\frac{dX(x_t,t,r)}{dt}
-
v(x_t,t),
\]
we have
\[
\frac{d A}{d t}
=
\frac{(r-t)\left(\frac{dX(x_t,t,r)}{dt}-v(x_t,t)\right)
+
\left(X(x_t,t,r)-x_t\right)}{(r-t)^2}.
\]

The right-hand side of Eq.\eqref{proof:eq:originalX} becomes
\begin{equation}
\begin{aligned}
v(x_t,t)
+
(r-t)\frac{d A}{d t}
&=
v(x_t,t)
+
\frac{(r-t)\left(\frac{dX(x_t,t,r)}{dt}-v(x_t,t)\right)
+
\left(X(x_t,t,r)-x_t\right)}{r-t} \\
&=
v(x_t,t)
+
\frac{dX(x_t,t,r)}{dt}
-
v(x_t,t)
+
\frac{X(x_t,t,r)-x_t}{r-t} \\
&=
\frac{dX(x_t,t,r)}{dt}
+
\frac{X(x_t,t,r)-x_t}{r-t}
\end{aligned}
\end{equation}

Comparing both sides of Eq.\eqref{proof:eq:originalX} yields
\begin{equation}
\begin{aligned}
\frac{X(x_t,t,r)-x_t}{r-t}
&=
\frac{dX(x_t,t,r)}{dt} + \frac{X(x_t,t,r)-x_t}{r-t} \\
X(x_t,t,r) &= X(x_t,t,r) + (r-t) \frac{dX(x_t,t,r)}{dt} \\
\end{aligned}
\end{equation}
Given $X(x_t, t, r) = x_{t \to r}$, the \textbf{Transition Flow Identity} can be shown as:
\begin{equation}
\label{proof:eq:transitionIdentity}
\boxed{
X(x_t,t,r) = x_{t \to r} + (r-t) \frac{d}{dt} X(x_t,t,r)
}
\end{equation}

\subsection{Calculate Time Derivative of Transition Flow}
To compute the $\frac{d}{dt}X(x_t, t, r)$ term, we expand it in terms of partial derivatives:
\begin{equation}
\label{proof:eq:X_timederivative}
\begin{aligned}
\frac{d}{dt} X(x_t,t,r) &= \partial_{x_t}X \cdot \frac{d x_t}{dt} + \partial_{t}X \cdot \frac{d t}{dt} + \partial_{r}X \cdot \frac{d r}{dt} \\
&= \partial_{x_t}X \cdot v(x_t, t) + \partial_{t}X \cdot 1 + \partial_{r}X \cdot 0 \\
&= \partial_{x_t}X \cdot v(x_t, t) + \partial_{t}X
\end{aligned}
\end{equation}
where $\frac{d x_t}{dt} = v(x_t,t)$ as shown in Eq.\eqref{proof:eq:ode_marginal}.
The time derivative shown in Eq.~\eqref{proof:eq:X_timederivative} is given by the Jacobian-vector product (JVP) between the Jacobian matrix of each function ($[\partial_{x_t}X, \partial_{t}X, \partial_{r}X]$) and the corresponding tangent vector ($[v, 0, 1]$). For code implementation, modern libraries such as PyTorch provide efficient JVP calculation interfaces.

\subsection{Towards Tractable Transition Flow Matching Objective} \label{proof:sec:M2C}
Up to this point, the formulations are independent of any network parameterization. We now introduce the Transition Flow model $X^\theta(x_t,t,r)$, parameterized by $\theta$, to learn $X(x_t,t,r)$. 
Formally, we encourage $X^\theta(x_t,t,r)$ to satisfy the \textbf{Transition Flow Identity} as Eq.\eqref{proof:eq:transitionIdentity}.
To this end, ideally, we can minimize the marginal Transition Flow Matching objective (M-TFM):
\begin{equation}
\begin{aligned}
\label{proof:eq:MTFM}
\mathcal{L}_\mathrm{M-TFM}(\theta) = \mathbb{E}_{t,\;r,\;X_t\sim p_t}\,
D\Big(\mathrm{sg}\big[X_{\mathrm{tgt}}^{\mathrm{m}}(X_t,t, r)\big],\,X^\theta(X_t,t, r)\Big)
\end{aligned}
\end{equation}
\begin{equation}
\label{proof:eq:marginalXtgt}
\text{where} \quad \quad X_{\mathrm{tgt}}^{\mathrm{m}}(X_t,t, r) = X_{t \to r} + (r-t) \frac{d}{dt} X^\theta(X_t,t,r)
\end{equation}
where $D(\cdot)$ represents a Bregman divergence (e.g MSE) to regress our learnable Transition Flow $X^\theta(x_t,t,r)$ onto the target $X_{\mathrm{tgt}}^{\mathrm{m}}(x_t,t, r)$, moreover, $\mathrm{sg}[\cdot]$ donates the stop gradient (sg) operation, indicating $X_{\mathrm{tgt}}^{\mathrm{m}}(x_t,t, r)$ serves as ground-truth in this loss function and is independent of optimization.

However, the marginal state $x_{t \to r}$ in Eq.\eqref{proof:eq:marginalXtgt} and the marginal velocity $v(x_t, t)$ used in calculation of $\frac{d}{dt} X^\theta(x_t,t,r)$ Eq.\eqref{proof:eq:X_timederivative} are not tractable, so the marginal loss Eq.\eqref{proof:eq:MTFM} above cannot be computed as is.

% Subtracting the common term on both sides, we conclude that
% \[
% \boxed{
% \frac{dX(x_t,t,r)}{dt} = 0.
% }
% \]

Instead, we minimize conditional Transition Flow Matching loss (C-TFM), which is tractable:
\begin{equation}
\begin{aligned}
\label{proof:eq:CTFM}
\mathcal{L}_\mathrm{C-TFM}(\theta) = \mathbb{E}_{t,\;r,\;Z,\;X_t\sim p_{t\mid Z}(\cdot \mid Z)}\,
D\Big(\mathrm{sg}\big[X_{\mathrm{tgt}}^{\mathrm{c}}(X_t,t, r \mid Z)\big],\,X^\theta(X_t,t, r)\Big)
\end{aligned}
\end{equation}
\begin{equation}
\label{proof:eq:conditionalXtgt}
\text{where} \quad \quad X_{\mathrm{tgt}}^{\mathrm{c}}(X_t,t, r \mid Z) = X_{t \to r}^Z + (r-t) \frac{d}{dt} X^\theta(X_t,t,r)
\end{equation}
where the conditional state $x_{t \to r}^z$ in Eq.\eqref{proof:eq:conditionalXtgt} and the conditional velocity $v(x_t, t \mid z)$ used in calculation of $\frac{d}{dt} X^\theta(x_t,t,r)$ Eq.\eqref{proof:eq:X_timederivative} are tractable (see Eq.\eqref{proof:eq:M2C} for details), gives us a computable loss function Eq.\eqref{proof:eq:CTFM}.

The two losses Eq.\eqref{proof:eq:MTFM} and Eq.\eqref{proof:eq:CTFM} are equivalent for learning purposes, since their gradients coincide:
\begin{theorem}[Gradient equivalence of Transition Flow Matching]\label{proof:thm:MequivC}
The gradients of the marginal Transition Flow Matching loss and the conditional Transition Flow Matching loss coincide:
\begin{equation}
\label{proof:eq:marginalEqvconditional}
\nabla_\theta \mathcal{L}_\mathrm{M-TFM}(\theta)
\;=\;
\nabla_\theta \mathcal{L}_\mathrm{C-TFM}(\theta)
\end{equation}
In particular, the minimizer of the conditional Transition Flow Matching loss is the marginal target $X_{\mathrm{tgt}}^{\mathrm{m}}(x_t,t, r)$, which satisfies the Transition Flow Identity Eq.\eqref{proof:eq:transitionIdentity}.
\end{theorem}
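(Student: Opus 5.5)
The plan is to mirror the standard argument behind Theorem~\ref{proof:thm:MequivC_FM}. The proof has two ingredients:
\begin{itemize}
\item[(i)] the conditional target $X_{\mathrm{tgt}}^{\mathrm{c}}(X_t,t,r\mid Z)$ is an \emph{unbiased} estimator of the marginal target $X_{\mathrm{tgt}}^{\mathrm{m}}(X_t,t,r)$ given $X_t$;
\item[(ii)] the gradient of a Bregman divergence in its second argument is affine in its first argument.
\end{itemize}
Throughout, $t,r$ are sampled identically in both losses. Because of the stop-gradient, both targets are treated as $\theta$-independent vectors when differentiating, so the only $\theta$-dependence sits in $X^\theta(X_t,t,r)$.

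\textbf{Step 1 (unbiasedness of the target).} Fix $(t,r)$ and $x_t$. Eq.~\eqref{proof:eq:X_timederivative}, with the conditional velocity substituted as in Eq.~\eqref{eq:condTimeDX}, gives
\[
\tfrac{d}{dt}X^\theta \text{ (conditional)} = \partial_{x_t}X^\theta(x_t,t,r)\,v(x_t,t\mid z)+\partial_t X^\theta(x_t,t,r).
\]
The Jacobian $\partial_{x_t}X^\theta$ and $\partial_t X^\theta$ depend only on $(x_t,t,r)$ and not on $z$. The expression is therefore affine in $v(x_t,t\mid z)$, and averaging against $p_{Z\mid t}(z\mid x_t)$ produces $\partial_{x_t}X^\theta\, v(x_t,t)+\partial_tX^\theta$ by Eq.~\eqref{proof:eq:marginalV}. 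Likewise, Eq.~\eqref{proof:eq:marginalstate} gives $\mathbb{E}[X^Z_{t\to r}\mid X_t=x_t]=x_{t\to r}$. Since $(r-t)$ is a deterministic scalar, the two identities combine to
\[
\mathbb{E}\big[X_{\mathrm{tgt}}^{\mathrm{c}}(X_t,t,r\mid Z)\,\big|\,X_t=x_t\big]=X_{\mathrm{tgt}}^{\mathrm{m}}(x_t,t,r).
\]

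\textbf{Step 2 (Bregman gradient).} Write $D(a,b)=\phi(a)-\phi(b)-\langle\nabla\phi(b),a-b\rangle$. Then $\nabla_b D(a,b)=\nabla^2\phi(b)(b-a)$, which is affine in $a$. By the chain rule with the target held fixed,
\[
\nabla_\theta D(a,X^\theta)=(\partial_\theta X^\theta)^\top\nabla^2\phi(X^\theta)(X^\theta-a).
\]

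\textbf{Step 3 (tower property).} Factor the joint law of $(Z,X_t)$ as $p_t(x_t)\,p_{Z\mid t}(z\mid x_t)$ via Eq.~\eqref{proof:eq:bayes_xt} and Bayes' rule. Exchange $\nabla_\theta$ with the expectation, which requires regularity (e.g.\ dominated convergence with bounded second moments and smooth $X^\theta$). Then condition on $X_t$ inside $\nabla_\theta\mathcal{L}_{\mathrm{C\text{-}TFM}}$. Everything in the Step~2 expression except $a$ is a function of $(X_t,t,r)$, so by Step~1 the inner conditional expectation replaces $a=X_{\mathrm{tgt}}^{\mathrm{c}}$ with $X_{\mathrm{tgt}}^{\mathrm{m}}$. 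This yields exactly $\nabla_\theta\mathcal{L}_{\mathrm{M\text{-}TFM}}$.

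\textbf{Minimizer claim.} At a stationary point, the same conditional-expectation computation shows that $X^\theta$ is the $L^2$ (or Bregman) projection of the conditional target onto functions of $(X_t,t,r)$, which is the marginal target. A fixed point therefore satisfies Eq.~\eqref{proof:eq:transitionIdentity}.

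The main obstacle is Step~1. The target contains the model's own derivative, so one must check that the stop-gradient makes the JVP a fixed, $z$-independent linear map applied to $v(\cdot\mid z)$. One must also check that the conditional chain rule legitimately uses the conditional velocity as the tangent direction at the same point $x_t$. Once that is established, the rest follows the Flow Matching argument verbatim.
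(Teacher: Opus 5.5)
Your proposal is correct and follows essentially the same argument as the paper. Your Step~1 is the paper's step (c): the conditional target averages to the marginal one because $\partial_{x_t}X^\theta$ and $\partial_t X^\theta$ are $Z$-independent and the target is affine in $v(x_t,t\mid z)$ and $X^Z_{t\to r}$. Your Step~2 is step (d), the affinity of $\nabla_2 D$ in its first argument, which you make explicit as $\nabla^2\phi(b)(b-a)$. Your Step~3 bundles the stop-gradient chain rule and the Bayes/tower exchange of steps (a), (b), (e), (f). The differences are cosmetic: you reduce $\nabla_\theta\mathcal{L}_{\mathrm{C\text{-}TFM}}$ to $\nabla_\theta\mathcal{L}_{\mathrm{M\text{-}TFM}}$ rather than expanding in the opposite direction, and you more carefully describe the ``minimizer'' as a stationary or fixed point of the stop-gradient objective.
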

\begin{proof}[Proof of Theorem~\ref{proof:thm:MequivC}]
We show Eq.\eqref{proof:eq:marginalEqvconditional} by a direct computation.
\begin{equation}
\label{proof:eq:ML2CL}
\scriptsize {
\begin{aligned}
\nabla_\theta \mathcal{L}_\mathrm{M\text{-}TFM}(\theta)
&=
\nabla_\theta \mathbb{E}_{t,\;r,\;X_t\sim p_t}\,
D\Big(\mathrm{sg}\big[X_{\mathrm{tgt}}^{\mathrm{m}}(X_t,t,r)\big],\,X^\theta(X_t,t,r)\Big)
\\
&\overset{(\mathrm{a})}{=}
\mathbb{E}_{t,\;r,\;X_t\sim p_t}\;
\nabla_\theta
D\Big(\mathrm{sg}\big[X_{\mathrm{tgt}}^{\mathrm{m}}(X_t,t,r)\big],\,X^\theta(X_t,t,r)\Big)
\\
&\overset{(\mathrm{b})}{=}
\mathbb{E}_{t,\;r,\;X_t\sim p_t}\;
\nabla_{2}D\Big(X_{\mathrm{tgt}}^{\mathrm{m}}(X_t,t,r),\,X^\theta(X_t,t,r)\Big)\,
\nabla_\theta X^\theta(X_t,t,r)
\\
&\overset{(\mathrm{c})}{=}
\mathbb{E}_{t,\;r,\;X_t\sim p_t}\;
\nabla_{2}D\Big(
\mathbb{E}_{Z\sim p_{Z\mid t}(\cdot\mid X_t)}\big[X_{\mathrm{tgt}}^{\mathrm{c}}(X_t,t,r\mid Z)\big],
\,X^\theta(X_t,t,r)\Big)\,
\nabla_\theta X^\theta(X_t,t,r)
\\
&\overset{(\mathrm{d})}{=}
\mathbb{E}_{t,\;r,\;X_t\sim p_t}\;
\mathbb{E}_{Z\sim p_{Z\mid t}(\cdot\mid X_t)}\Big[
\nabla_{2}D\Big(X_{\mathrm{tgt}}^{\mathrm{c}}(X_t,t,r\mid Z),\,X^\theta(X_t,t,r)\Big)\,
\nabla_\theta X^\theta(X_t,t,r)
\Big]
\\
&\overset{(\mathrm{e})}{=}
\mathbb{E}_{t,\;r,\;X_t\sim p_t}\;
\mathbb{E}_{Z\sim p_{Z\mid t}(\cdot\mid X_t)}\Big[
\nabla_{\theta}
D\Big(\mathrm{sg}\big[X_{\mathrm{tgt}}^{\mathrm{c}}(X_t,t,r\mid Z)\big],\,X^\theta(X_t,t,r)\Big)
\Big]
\\
&\overset{(\mathrm{f})}{=}
\nabla_\theta\,
\mathbb{E}_{t,\;r,\;Z,\;X_t\sim p_{t\mid Z}(\cdot\mid Z)}\,
D\Big(\mathrm{sg}\big[X_{\mathrm{tgt}}^{\mathrm{c}}(X_t,t,r\mid Z)\big],\,X^\theta(X_t,t,r)\Big)
\\
&=
\nabla_\theta \mathcal{L}_\mathrm{C\text{-}TFM}(\theta).
\end{aligned} }
\end{equation}

\paragraph{Explanations of labeled steps.}
\begin{enumerate}\itemsep 4pt
\item[(a)] Interchange $\nabla_\theta$ and $\mathbb{E}$ by the Leibniz rule.
\item[(b)] The stop-gradient makes the first argument $\mathrm{sg}[X_{\mathrm{tgt}}^{\mathrm{m}}(X_t,t,r)]$ $\theta$-independent for differentiation, hence the gradient flows only through the second argument. Applying the chain rule yields
$\nabla_\theta D(\mathrm{sg}[\cdot],X^\theta)=\nabla_2 D(\cdot,X^\theta)\,\nabla_\theta X^\theta$.
\item[(c)] Use the definitions Eq.\eqref{proof:eq:marginalXtgt} and Eq.\eqref{proof:eq:conditionalXtgt} together with the marginal--conditional relation for transition states Eq.\eqref{proof:eq:marginalTransitionFlow} and velocity Eq.\eqref{proof:eq:marginalV}:
\begin{equation}
\label{proof:eq:M2C}
\scriptsize {
\begin{aligned}
X_{\mathrm{tgt}}^{\mathrm{m}}(X_t,t,r)
&=
X_{t\to r}+(r-t)\frac{d}{dt}X^\theta(X_t,t,r) \\
&=
X(X_t, t, r)+(r-t)\frac{d}{dt}X^\theta(X_t,t,r) \\
&=
X(X_t, t, r)+(r-t)\Big(\partial_{X_t}X^\theta(X_t,t,r)\cdot v(X_t,t)
+\partial_t X^\theta(X_t,t,r)\Big) \\
&=
\mathbb{E}_{Z\sim p_{Z\mid t}(\cdot\mid X_t)}\!\Big[X(X_t, t, r \mid Z)+(r-t)\Big(\partial_{X_t}X^\theta(X_t,t,r)\cdot v(X_t,t \mid Z) \\
&+\partial_t X^\theta(X_t,t,r)\Big)\Big] \\
&=
\mathbb{E}_{Z\sim p_{Z\mid t}(\cdot\mid X_t)}\!\Big[X(X_t, t, r \mid Z)+(r-t)\frac{d}{dt}X^\theta(X_t,t,r)\Big] \\
&=
\mathbb{E}_{Z\sim p_{Z\mid t}(\cdot\mid X_t)}\!\Big[X_{t\to r}^Z+(r-t)\frac{d}{dt}X^\theta(X_t,t,r)\Big] \\
&=
\mathbb{E}_{Z\sim p_{Z\mid t}(\cdot\mid X_t)}\!\big[X_{\mathrm{tgt}}^{\mathrm{c}}(X_t,t,r\mid Z)\big],
\end{aligned} }
\end{equation}
where $\partial_{X_t}X^\theta(X_t,t,r)$ and $\partial_t X^\theta(X_t,t,r)$ are $Z$-independent given $(X_t,t,r)$.
\item[(d)] Since $D(\cdot,\cdot)$ is a Bregman divergence, its gradient with respect to the second argument, $\nabla_2 D(a,b)$, is affine in the first argument $a$ for fixed $b$. Conditioning on $X_t$, this implies the (conditional) expectation can be moved inside:
\[
\nabla_2 D\Big(\mathbb{E}[A\mid X_t],\,b\Big)
=
\mathbb{E}\big[\nabla_2 D(A,b)\mid X_t\big].
\]
Apply this with $A=X_{\mathrm{tgt}}^{\mathrm{c}}(X_t,t,r\mid Z)$ and $b=X^\theta(X_t,t,r)$.
\item[(e)] Reverse the chain rule as in (b): because $\mathrm{sg}[X_{\mathrm{tgt}}^{\mathrm{c}}]$ freezes the first argument, we have
$\nabla_{2}D(\cdot,X^\theta)\,\nabla_\theta X^\theta=\nabla_\theta D(\mathrm{sg}[\cdot],X^\theta)$.
\item[(f)] Use Bayes' rule to swap the sampling orders:
\[
\mathbb{E}_{X_t\sim p_t}\,\mathbb{E}_{Z\sim p_{Z\mid t}(\cdot\mid X_t)}[\cdot]
=
\mathbb{E}_{Z}\,\mathbb{E}_{X_t\sim p_{t\mid Z}(\cdot\mid Z)}[\cdot],
\]
and interchange $\nabla_\theta$ with $\mathbb{E}$ (as in (a)) to recognize $\nabla_\theta \mathcal{L}_\mathrm{C\text{-}TFM}(\theta)$ in Eq.\eqref{proof:eq:CTFM}.
\end{enumerate}

Therefore,
$\nabla_\theta \mathcal{L}_\mathrm{M\text{-}TFM}(\theta)=\nabla_\theta \mathcal{L}_\mathrm{C\text{-}TFM}(\theta)$,
which proves Eq.\eqref{proof:eq:marginalEqvconditional}. \qed

% \medskip
\begin{remark}[Minimizer]
Since the two objectives Eq.\eqref{proof:eq:MTFM} and Eq.\eqref{proof:eq:CTFM} have identical gradients, they share the same stationary points. Moreover, because
\[
X_{\mathrm{tgt}}^{\mathrm{m}}(X_t,t,r)
=
\mathbb{E}_{Z\sim p_{Z\mid t}(\cdot\mid X_t)}\big[X_{\mathrm{tgt}}^{\mathrm{c}}(X_t,t,r\mid Z)\big],
\]
the conditional objective Eq.\eqref{proof:eq:CTFM} regresses $X^\theta(X_t,t,r)$ toward the marginal target $X_{\mathrm{tgt}}^{\mathrm{m}}(X_t,t,r)$, and $X_{\mathrm{tgt}}^{\mathrm{m}}$ is defined to satisfy the \textbf{Transition Flow Identity} Eq.\eqref{proof:eq:transitionIdentity}.
\end{remark}
\end{proof}

\subsection{Training Transition Flow Matching Model}
Building on the previous development, we now have all the ingredients required to train a Transition Flow Matching model. 
We inherit the setting in Remark~\ref{proof:remark:FM} by taking $Z=(X_0,X_1)$ and using the linear interpolant $X_t = (1-t)X_0+tX_1$, which yields a concrete, standard form of Transition Flow Matching.
\begin{remark}[Standard Transition Flow Matching]
\label{proof:remark:TFM}
We adopt the standard Flow Matching setting in Remark~\ref{proof:remark:FM} by setting
$Z=(X_0,X_1)$, and use the linear interpolant
\[
X_t=(1-t)X_0+tX_1,\qquad 0\le t\le 1.
\]
For any $0\le t\le r\le 1$, the conditional transition state is
\[
X_{t\to r}^Z=(1-r)X_0+rX_1,
\]
and the conditional velocity is constant:
\[
v(X_t,t\mid Z)=X_1-X_0.
\]
The tractable Transition Flow Matching objective in Eq.\eqref{proof:eq:CTFM} can be written explicitly as
\begin{equation}
\begin{aligned}
\label{proof:eq:STFM}
\mathcal{L}_\mathrm{TFM}(\theta) = \mathbb{E}_{t,\;r,\;Z,\;X_t\sim p_{t\mid Z}(\cdot \mid Z)}\,
D\Big(\mathrm{sg}\big[X_{t \to r}^Z + (r-t) \frac{d}{dt} X^\theta(X_t,t,r)\big],\,X^\theta(X_t,t, r)\Big)
\end{aligned}
\end{equation}
\begin{equation}
\label{proof:eq:condTimeDX}
\text{where} \quad \quad \frac{d}{dt} X^\theta(X_t,t,r) = \partial_{x_t}X^\theta(X_t,t,r) \cdot v(X_t, t \mid Z) + \partial_{t}X^\theta(X_t,t,r) \\
\end{equation}

During inference, we recursively apply the model to transform the generation trajectory from the source distribution to the target distribution:
\begin{equation}
\hat{x_r} = X^\theta (x_t, t, r)
\end{equation}
For clarity, we summarize the conceptual training and inference procedures in Algorithm~\ref{alg:tfm-training} and Algorithm~\ref{alg:tfm-multistep-sampling}. In particular, by extending Algorithm~\ref{alg:tfm-multistep-sampling}, the one-step generation procedure is described in Algorithm~\ref{alg:tfm-1step-sampling}.
\end{remark}

% \begin{algorithm}[t]
% \caption{Transition Flow Matching (TFM): Training}
% \label{alg:tfm-training}
% \begin{algorithmic}
% \STATE \textbf{Note:} in PyTorch and JAX, \texttt{jvp} returns the function output and JVP.
% \STATE \textbf{//} \texttt{tfm(x\_t, t, r)}: the function using model $X^\theta(x_t,t,r)$ predicting the transition state at time $r$
% \STATE \textbf{//} \texttt{metric(.)}: e.g., MSE, any Bregman divergence on residuals
% \STATE
% \STATE $x_1 \leftarrow \texttt{sample\_from\_training\_batch}()$ \hfill \textbf{//} sample from target distribution $p_1$
% \STATE $x_0 \leftarrow \texttt{randn\_like}(x_1)$ \hfill \textbf{//} sample from source distribution $p_0$ (Gaussian), independent of $p_1$
% \STATE $(t,r) \leftarrow \texttt{sample\_t\_r}()$ \hfill \textbf{//} $0\le t \le r \le 1$
% \STATE
% \STATE $x_t \leftarrow (1-t)\,x_0 + t\,x_1$ \hfill \textbf{//} current state
% \STATE $v \leftarrow x_1 - x_0$ \hfill \textbf{//} conditional velocity with $Z=(X_0,X_1)$
% \STATE $x_{t\to r} \leftarrow (1-r)\,x_0 + r\,x_1$ \hfill \textbf{//} conditional transition state with $Z=(X_0,X_1)$
% \STATE
% \STATE $X,\ \frac{dX}{dt} \leftarrow \texttt{jvp}(\texttt{tfm}, (x_t,t,r), (v,1,0))$ \hfill \textbf{//} $\frac{d}{dt}X^\theta = \partial_{x_t}X^\theta\cdot v + \partial_t X^\theta$
% \STATE
% \STATE $X_{\text{tgt}} \leftarrow x_{t\to r} + (r-t)\,\frac{dX}{dt}$
% \STATE $\text{error} \leftarrow X - \texttt{stopgrad}(X_{\text{tgt}})$
% \STATE $\text{loss} \leftarrow \texttt{metric}(\text{error})$
% \STATE \textbf{return} loss
% \end{algorithmic}
% \end{algorithm}

\begin{algorithm}[t]
\caption{Transition Flow Matching (TFM): Training}
\label{alg:tfm-training}

% white editor-style box (no tcolorbox)
\setlength{\fboxsep}{8pt}
\noindent
\fcolorbox{CodeFrame}{CodeBG}{
\begin{minipage}{0.933\linewidth}
\ttfamily\small

\cm{Note: In PyTorch and JAX, jvp returns (function\_output, JVP).}\\
\cm{// tfm(x\_t, t, r): uses model $X^\theta(x_t,t,r)$ predicting the transition state at time r}\\
\cm{// metric(.): e.g., MSE, any Bregman divergence on residuals}\\[4pt]

x\_1 = \fn{sample\_from\_training\_batch}() \hfill \cm{// sample from target distribution $p_1$}\\
x\_0 = \fn{randn\_like}(x\_1) \hfill \cm{// sample from source distribution $p_0$ (Gaussian), independent of $p_1$}\\
(t, r) = \fn{sample\_t\_r}() \hfill \cm{// 0 <= t <= r <= 1}\\[4pt]

x\_t = (1 - t)\,x\_0 + t\,x\_1 \hfill \cm{// current state}\\
v = x\_1 - x\_0 \hfill \cm{// conditional velocity with Z = (X\_0, X\_1)}\\
x\_{t\_to\_r} = (1 - r)\,x\_0 + r\,x\_1 \hfill \cm{// conditional transition state with Z = (X\_0, X\_1)}\\[4pt]

(X, dX\_dt) = \fn{jvp}(\model{tfm}, (x\_t, t, r), (v, 1, 0)) \hfill \cm{// d/dt $X^\theta$ = partial\_x $X^\theta$ * v + partial\_t $X^\theta$}\\[4pt]

X\_tgt = x\_{t\_to\_r} + (r - t)\,dX\_dt\\
error = X - \fn{stopgrad}(X\_tgt)\\
loss = \fn{metric}(error)\\[2pt]

\textbf{return} loss

\end{minipage}
}

\end{algorithm}
% \begin{figure}[t]
% \centering
% \begin{minipage}[t]{0.60\linewidth}
% \centering
% \begin{algorithm}[H]
% \caption{TFM: Multi-step Sampling with Arbitrary Step Sizes}
% \label{alg:tfm-multistep-sampling}
% \begin{algorithmic}
% \STATE \textbf{//} Choose any time grid: $0=t_0 < t_1 < \cdots < t_K=1$
% \STATE $x \leftarrow \texttt{randn}(x\_shape)$ \hfill \textbf{//} $x=x_{t_0}=x_0$
% \FOR{$k=0$ \textbf{to} $K-1$}
%     \STATE $t \leftarrow t_k,\ \ r \leftarrow t_{k+1}$
%     \STATE $x \leftarrow \texttt{tfm}(x, t, r)$ \hfill \textbf{//} $x = x_{t\to r}$
% \ENDFOR
% \STATE \textbf{return} $x$ \hfill \textbf{//} $x = x_{t_K}=x_1$
% \end{algorithmic}
% \end{algorithm}
% \end{minipage}\hfill
% \begin{minipage}[t]{0.35\linewidth}
% \centering
% \begin{algorithm}[H]
% \caption{TFM: 1-step Sampling}
% \label{alg:tfm-1step-sampling}
% \begin{algorithmic}
% \STATE $x \leftarrow \texttt{randn}(x\_shape)$ \hfill \textbf{//} $x = x_0$
% \STATE $x \leftarrow \texttt{tfm}(x, 0, 1)$ \hfill \textbf{//} $x = x_1$
% \STATE \textbf{return} $x$
% \end{algorithmic}
% \end{algorithm}
% \end{minipage}
% \end{figure}

\begin{figure}[t]
\centering

\begin{minipage}[t]{0.60\linewidth}
\centering
\begin{algorithm}[H]
\caption{TFM: Multi-step Sampling with Arbitrary Step Sizes}
\label{alg:tfm-multistep-sampling}

% white editor-style box (same format as before)
\setlength{\fboxsep}{8pt}
\noindent
\fcolorbox{CodeFrame}{CodeBG}{
\begin{minipage}{0.89\linewidth}
\ttfamily\small

\cm{// Choose any time grid: $0=t_0 < t_1 < \cdots < t_K=1$}\\[4pt]

x = \fn{randn}(x\_shape) \hfill \cm{// x = x\_{t\_0} = x\_0}\\[2pt]

for k = 0 to K - 1:\\
\hspace*{1.5em} t = t\_k,\ \ r = t\_{k+1}\\
\hspace*{1.5em} x = \fn{tfm}(x, t, r) \hfill \cm{// x = x\_{t\_to\_r}}\\[2pt]

\textbf{return} x \hfill \cm{// x = x\_{t\_K} = x\_1}

\end{minipage}
}

\end{algorithm}
\end{minipage}\hfill

\begin{minipage}[t]{0.35\linewidth}
\centering
\begin{algorithm}[H]
\caption{TFM: 1-step Sampling}
\label{alg:tfm-1step-sampling}

% white editor-style box (same format as before)
\setlength{\fboxsep}{8pt}
\noindent
\fcolorbox{CodeFrame}{CodeBG}{
\begin{minipage}{0.81\linewidth}
\ttfamily\small

x = \fn{randn}(x\_shape) \hfill \cm{// x = x\_0}\\
x = \fn{tfm}(x, 0, 1) \hfill \cm{// x = x\_1}\\[2pt]

\textbf{return} x

\end{minipage}
}

\end{algorithm}
\end{minipage}

\end{figure}

% \section*{Acknowledgements}
% Please insert your acknowledgments here.

% ---- Bibliography ----
%
% BibTeX users should specify bibliography style 'splncs04'.
% References will then be sorted and formatted in the correct style.
%

\bibliography{main}
\bibliographystyle{splncs04}

\end{document}